\newtheorem{theorem}{Theorem}
\definecolor{graywhite}{RGB}{235,235,235}
\title{FL-Tuning: Layer Tuning for Feed-Forward Network in Transformer}
\author{
\footnotesize
Jingping Liu$^{1}$,
Yuqiu Song$^{1}$,
Kui Xue$^{2}$,
Hongli Sun$^{1}$,
Chao Wang$^{3}$,
Lihan Chen$^{3}$,\\
\footnotesize
\textbf{Haiyun Jiang}$^{4}$\textbf{,}
\textbf{Jiaqing Liang}$^{3}$\textbf{,}
\textbf{Tong Ruan}$^{1}$\thanks{Corresponding author}
\\
\footnotesize
$^{1}$School of Information Science and Engineering, East China University of Science and Technology  \\
\footnotesize
$^{2}$Shanghai Artificial Intelligence Laboratory, Shanghai, China\\
\footnotesize
$^{3}$Shanghai Key Laboratory of Data Science, School of Computer Science, Fudan University\\
\footnotesize
$^{4}$Tencent AI Lab, Shenzhen, China, \\ 
\footnotesize
\{jingpingliu, ruantong\}@ecust.edu.cn
}
\begin{document}

\maketitle

\begin{abstract}

Prompt tuning is an emerging way of adapting pre-trained language models to downstream tasks. However, the existing studies are mainly to add prompts to the input sequence. This way would not work as expected due to the intermediate multi-head self-attention and feed-forward network computation, making model optimization not very smooth. Hence, we propose a novel tuning way called layer tuning, aiming to add learnable parameters in Transformer layers. Specifically, we focus on layer tuning for feed-forward network in the Transformer, namely FL-tuning. It introduces additional units into the hidden layer of each feed-forward network. We conduct extensive experiments on the public CLUE benchmark. The results show that: 1) Our FL-tuning outperforms prompt tuning methods under both full-data and few-shot settings in almost all cases. In particular, it improves accuracy by 17.93\% (full-data setting) on WSC 1.0 and F1 by 16.142\% (few-shot setting) on CLUENER over P-tuning v2. 2) Our FL-tuning is more stable and converges about 1.17 times faster than P-tuning v2. 3) With only about 3\% of Transformer's parameters to be trained, FL-tuning is comparable with fine-tuning on most datasets, and significantly outperforms fine-tuning (e.g., accuracy improved by 12.9\% on WSC 1.1) on several datasets. The source codes are available at \url{https://github.com/genggui001/FL-Tuning}.
\end{abstract}

\section{Introduction}

Pre-trained language models (PLMs), such as ELMo~\cite{DBLP:conf/naacl/PetersNIGCLZ18} and BERT~\cite{DBLP:conf/naacl/DevlinCLT19}, have become increasingly important in natural language processing. The popular way to accommodate general-purpose PLMs to specific downstream tasks is to \textbf{FINE TUNE} them by updating all the parameters. As a result, it is necessary to store a modified copy of full-size model parameters for each task~\cite{DBLP:conf/acl/LiL20}. However, this would be prohibitively expensive when applying the model to a large number of tasks
~\cite{DBLP:conf/emnlp/ChenHCCLY20,DBLP:conf/cncl/SunQXH19,DBLP:conf/emnlp/LesterAC21}.

\textbf{PROMPT TUNING} is an emerging way to adapt PLMs to downstream tasks, which \textbf{\textit{adds prompts to the input sequence}} and feeds the new input to PLMs in the pre-training task. In this way, all PLM parameters are frozen and only the task-specific prompt is updated.
\textbf{Discrete prompt tuning} is the first method that leverages text tokens as the prompt to the original input~\cite{DBLP:conf/emnlp/ShinRLWS20}. For example, in text classification, 
a common prompt tuning method is to concatenate an input (e.g., ``I haven't published a paper.'') with the prompt ``I felt [MASK]'' and ask PLMs to fill the masked token with ``happy'' or ``sad''. Since the PLMs are continuous from an optimization point of view, it is difficult to achieve the optimum with discrete prompts~\cite{liu2021p}. \textbf{Continuous prompt tuning} is thus proposed to replace text tokens with trainable embeddings, which outperforms discrete prompt tuning on many tasks~\cite{DBLP:conf/emnlp/LesterAC21,liu2021p,liu2021gpt}. However, the impact of input prompts before the first transformer layer will gradually weaken due to the multiple intermediate layers' computation~\cite{liu2021p}. To address this problem, \textbf{deep prompt tuning}~\cite{liu2021p} is proposed, which takes continuous embeddings as the prompt to the input sequence of each layer in PLMs. 
Although this method has shown better performance on many tasks, its essence is still only adding a few parameters as prompts to the input. 
These prompts would not work as expected due to the intermediate multi-head self-attention and feed-forward network computation, making model optimization not very smooth. This situation leads to limited performance gain, unstable training, and slow convergence.

To address the above problem, we propose a novel tuning way, namely \textbf{LAYER TUNING}, for adapting PLMs to downstream tasks. Different from prompt tuning, layer tuning is to \textbf{\textit{add learnable parameters to the Transformer layers}} with original PLM parameters frozen. Transformer~\cite{DBLP:conf/nips/VaswaniSPUJGKP17} mainly contains two sub-layers: multi-head self-attention and feed-forward network (FFN). In this paper, 
we mainly focus on \textbf{L}ayer \textbf{Tuning} for \textbf{F}eed-forward network in the Transformer, namely \textbf{FL-tuning}. As shown in Figure \ref{fig:compare}, it aims at introducing additional hidden units into each FFN layer. In other words, FL-tuning expands the dimensions of the weight matrices ($W_1$ and $W_2$ in Eq. (\ref{equation:Att})) and bias ($b_1$) of the linear layers in FFN.
The reason for tuning on FFN is that it accounts for about $2/3$ of the number of parameters in the Transformer encoder~\cite{DBLP:conf/emnlp/GevaSBL21}, and it is expected to obtain better results.
Due to the inconvenience of directly operating on the expanded weight matrices and biases, we split them into fixed and learnable parts. Hence, we can perform independent operations on the two parts to simplify the implementation process. The feasibility of this ``split'' idea is rigorously proved theoretically in Section \ref{section:Implementation}. In addition, we also prove that the influence of the trainable weight matrices and bias on model performance is independent of their positions in the expanded ones. That is, the learnable part can be placed at any position in the expanded weight matrices and bias.

\textbf{Contributions.} The contributions in this paper are summarized as follows:
\begin{itemize}
    \item To the best of our knowledge, we are the first to propose layer tuning for adapting PLMs to downstream task. The most distinguish characteristic is that it adds learnable parameters in Transformer layers.
    
    \item We propose a method of layer tuning for feed-forward network in the Transformer, namely FL-tuning. Our tuning method is more stable and converges faster than P-tuning v2.
    
    
    \item We conduct extensive experiments on 7 downstream tasks and 11 NLU datasets. The results show that our FL-tuning outperforms prompt tuning in almost all cases. In addition, with only about 3\% of Transformer's parameters to be trained, it is comparable with fine-tuning on most datasets, and significantly outperforms fine-tuning on several datasets.
\end{itemize}

\begin{figure}
\begin{center}
\includegraphics[width=0.95\textwidth]{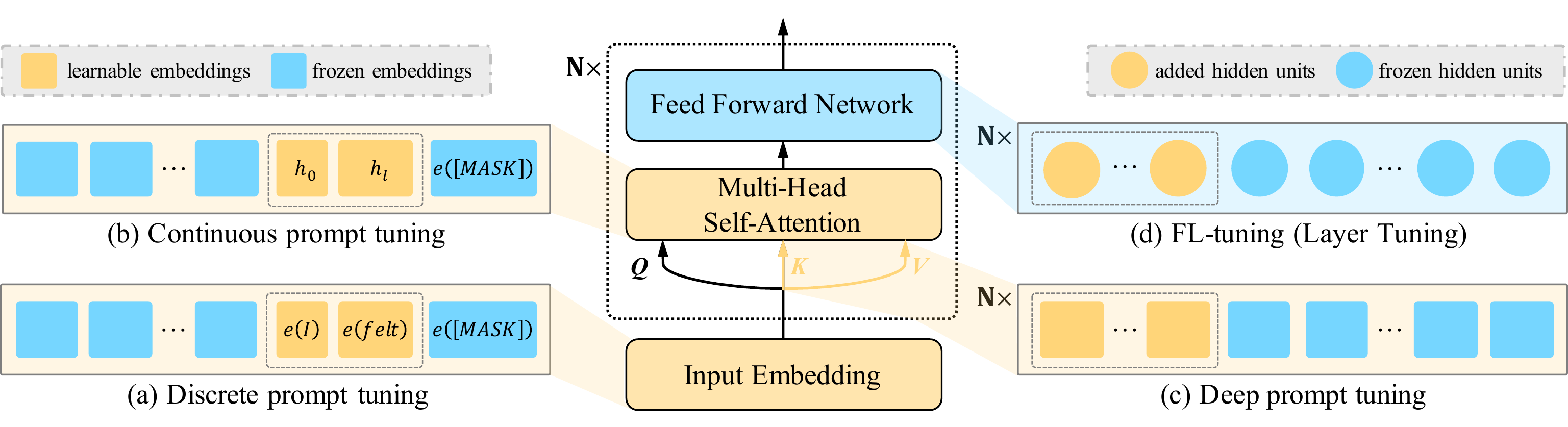}
\caption{The difference between FL-tuning and prompt tuning. The former introduces learnable parameters to each FFN in the Transformer, while the latter adds tokens/embeddings to the input.}
\label{fig:compare}
\end{center}
\end{figure}

\section{Related Work}
\label{section:related-work}



\textbf{Prompt tuning} adapts PLMs to downstream tasks by adding prompts to the input sequence and has been verified effective in many NLP applications~\cite{DBLP:conf/nips/BrownMRSKDNSSAA20,jiang2020can,han2021ptr,DBLP:conf/naacl/QinE21,DBLP:conf/acl/GaoFC20}. These methods can be divided into three categories. First, discrete prompt tuning directly generates the results without changing the pre-trained model parameters based only on the text tokens. LM-BFF~\cite{DBLP:conf/acl/GaoFC20} introduces T5~\cite{DBLP:journals/jmlr/RaffelSRLNMZLL20} into the template search process to generate tokens. Typical 
studies also include GPT-3~\cite{DBLP:conf/nips/BrownMRSKDNSSAA20} and LAMA~\cite{DBLP:conf/emnlp/PetroniRRLBWM19}.
Second, the appearance of P-tuning~\cite{liu2021gpt} turns on continuous prompt tuning work, which replaces text tokens with trainable embeddings to overcome the limitation of sub-optimal performance in the previous methods. P-tuning~\cite{liu2021gpt} introduces continuous embeddings learned by LSTM to the original sequence of input word embeddings. 
Prefix-tuning~\cite{DBLP:conf/acl/LiL20} designs task-specific trainable prefixes for natural language generation tasks. 
Third, deep prompt tuning is proposed to solve the challenges of continuous prompt tuning, which are the lack of universality across scales and tasks. 
P-tuning v2~\cite{liu2021p} applies continuous embeddings as the prompts to the input sequence of each layer in PLMs. 
Unlike the above studies, we aim to propose a novel PLMs' tuning way, namely layer tuning, which adds learnable parameters to the Transformer layers.
\textbf{Transformer} has been proved to be an effective architecture for PLMs~\cite{DBLP:conf/nips/VaswaniSPUJGKP17}. It is an encoder-decoder structure, which is composed of multi-head self-attention and FFN.
Much effort is dedicated to improve the Transformer's capability~\cite{DBLP:conf/naacl/FanGLWWJDZH21} and can be divided into three categories. First, some studies attempt to design a novel self-attention architecture in the Transformer~\cite{katharopoulos2020transformers,vyas2020fast,wang2021predictive,DBLP:conf/acl/XiongZS18,shazeer2020talking}. Linformer~\cite{wang2020linformer} proposes a new self-attention mechanism, which reduces the complexity of self-attention to $\textit{O(n)}$ in both time and space. 
Second, although FFN is just a multi-layer perceptron, it accounts for about $2/3$ of the number of parameters in the Transformer encoder. Kformer~\cite{DBLP:journals/corr/abs-2201-05742} improves the ability of PLMs by incorporating external knowledge into the FFN in the Transformer. 
Third, in addition to the separate researches on the two sub-layers, there are also many studies to modify the Transformer structure. Fan et al.~\cite{DBLP:conf/naacl/FanGLWWJDZH21} strength the Transformer with a dynamic mask attention network before the multi-head self-attention. 
Press et al.~\cite{DBLP:conf/acl/PressSL20} study the impact of the combined order of the two sub-layers in the Transformer on model performance. Different from the above work on sub-layers or adding new sub-layers to the Transformer, this paper focuses on introducing hidden units to each FFN layer for PLMs' tuning.

\section{Preliminaries}
\label{section:Preliminaries}
In this section, we briefly review the multi-head self-attention and feed-forward network in the Transformer architecture. Besides, we give the definitions of three kinds of prompt tuning, including discrete prompt tuning, continuous prompt tuning, and deep prompt tuning.


\subsection{Transformer}
\label{subsection:Transformer}
The Transformer model is based on an encoder-decoder structure. Each encoder (decoder) is composed of a stack of identical blocks, containing multi-head self-attention and feed-forward network. These two sub-layers are formulated as follows: 
\begin{equation}
\label{equation:Att}
    Attention({Q},{K},{V})=Softmax(\frac{QK^T}{\sqrt{d_k}}){V},\ \ \ 
    FFN(X)=ReLU(XW_1+b_1){W_2}+b_2,
\end{equation}
where $X\in \mathbb{R}^{d_u\times d_m}$, $Q=XW^Q$, $K=XW^K$, $V=XW^V$, $W^Q\in \mathbb{R}^{d_{m}\times d_k}$, $W^K\in \mathbb{R}^{d_{m}\times d_k}$, $W^V\in \mathbb{R}^{d_{m}\times d_v}$, $W_1\in \mathbb{R}^{d_{m}\times d_o}$, $W_2\in \mathbb{R}^{d_o\times d_{m}}$ are weight matrices, $b_1\in \mathbb{R}^{1\times d_o}$ and $b_2\in \mathbb{R}^{1\times d_{m}}$ are bias, and $ReLU(x)=max(0,x)$.


\subsection{Prompt tuning}
\textbf{\emph{Discrete prompt tuning}} leverages text tokens as prompts to adapt PLMs to downstream tasks. For example, to classify the sentiment of $S$=``I haven't published a paper.'' as \textit{happy} or \textit{sad}, we design a prompt ``I felt [MASK]'' for $S$. The input embedding for PLMs is defined as:
\begin{equation}
    [e(S),e(I),e(felt),e([MASK])].
\end{equation}

\textbf{\emph{Continuous prompt tuning}} adds the trainable embeddings to the input ones. Continuing the previous example, the prompt ``I felt [MASK]'' is replaced with continuous embeddings [$h_0,...,h_l$], where $l$ is the prompt length. The input sequence is thus formulated as:
\begin{equation}
    [e(S), h_0, ..., h_l, e([MASK])].
\end{equation}

\textbf{\emph{Deep prompt tuning}}, also known as P-tuning v2~\cite{liu2021p}, takes the prefix embeddings as the prompt to the input sequence in each Transformer layer. P-tuning v2 essentially concatenates two matrices $E_0$ and $E_1$ to the key matrix $K$ and the value matrix $V$ respectively in each multi-head attention layer, which is defined as:
\begin{equation}
    {Attention_{PV2}}({Q},{K},{V},{E_0},{E_1}) = Softmax(\frac{Q[E_0 \bot K]^T}{\sqrt{d_k}}){[E_1 \bot V]},
\end{equation}
where the operation ``$\bot$'' means the row-wise concatenation, $E_0$ and $E_1$ are trainable and other parameters are frozen.



\begin{figure}
\begin{center}
\includegraphics[width=0.95\textwidth]{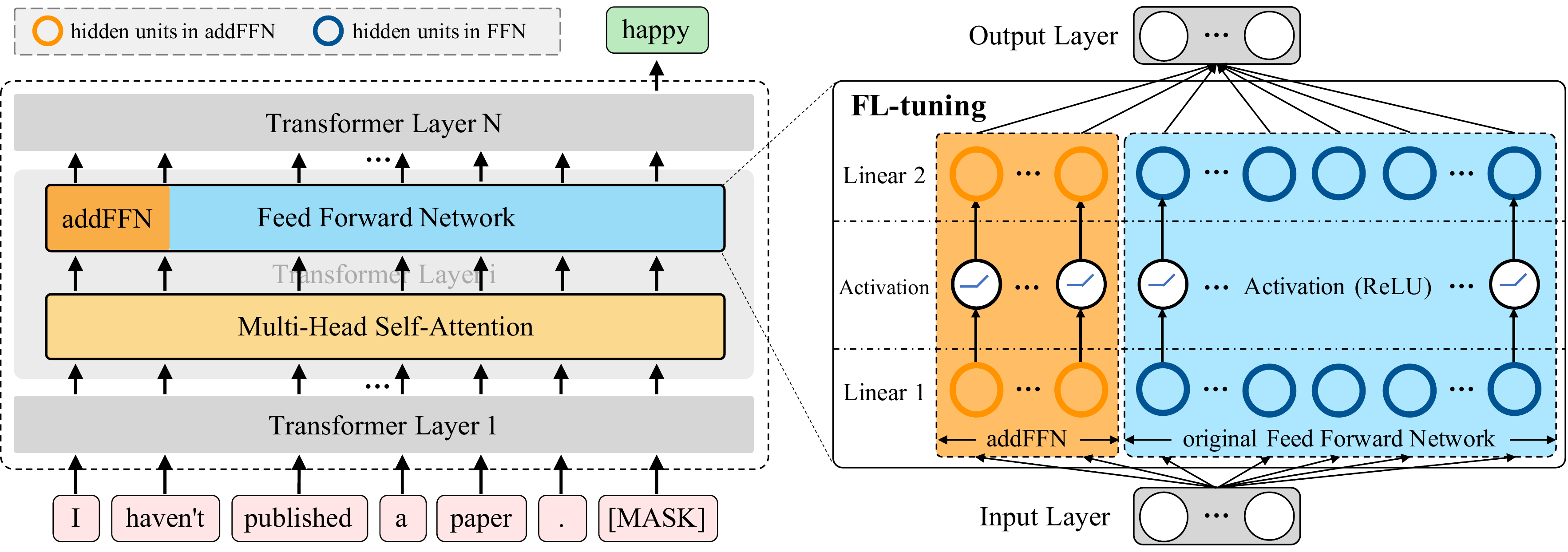}
\caption{The model architecture of FL-tuning. FL-tuning introduces additional hidden units to each FFN layer, and the units of each layer are independent. The orange circles are hidden units in addFFN, the blue circles are hidden units in original FFN.}
\label{fig:model}
\end{center}
\end{figure}


\section{Methodology}
\label{section:FL-tuning}
In this section, we first introduce the details of FL-tuning. Then, we give the theoretical proofs of its equivalent implementation.


\subsection{FL-tuning}
\label{section:Architecture}

In the Transformer~\cite{DBLP:conf/nips/VaswaniSPUJGKP17} architecture, FFN is a multi-layer perceptron with one hidden layer, consisting of two linear transformations with an ReLU activation in their middle~\cite{DBLP:conf/nips/VaswaniSPUJGKP17}. In essence, FL-tuning adds a number of hidden units in each FFN layer, as shown in Figure \ref{fig:model}. Note that the added hidden units in different layers are independent to each other. Formally, we change $W_1$ and ${W_2}$ in Eq. (\ref{equation:Att}) to [$W'_1$:$W_1$] and $[W'_2 \bot W_2]$ as follows:
\begin{equation}
\label{FNNFP}
    FFN_{FL}(X)=ReLU(X[W'_1:W_1]+[b'_1:b_1])[W'_2 \bot W_2]+b_2,
\end{equation}
where the operations ``$:$'' and ``$\bot$'' stand for the column- and row-wise concatenation, respectively. Only the parameters of $W'_1\in \mathbb{R}^{d_m\times d_a}$, $b'_1\in \mathbb{R}^{1\times d_a}$, and $W'_2\in \mathbb{R}^{d_a\times d_m}$ can be trained and all parameters of the Transformer-based PLMs are frozen.

\subsection{Implementation}
\label{section:Implementation}
However, it is inconvenient to directly utilize Eq. (\ref{FNNFP}) to implement FL-tuning because the learnable and fixed parameters are mixed in matrices. Therefore, we keep the original FFN hidden layer and add another small FFN hidden layer (referred to as addFFN). 
The equivalence is established by the  following theorem.

\begin{theorem}
\label{theorem: prefix}
In FL-tuning, FFN can be split into:
\begin{equation}
\label{split}
    {FFN_{FL}}(X)={addFFN}(X)+{FFN}(X),
\end{equation}
where ${addFFN}(X)=ReLU(XW'_1+b'_1){W'_2}$.
\end{theorem}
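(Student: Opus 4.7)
The plan is to reduce the identity to a direct block-matrix computation, leveraging two elementary facts: matrix multiplication distributes over column-wise and row-wise concatenation in the usual block sense, and ReLU, being applied entrywise, commutes with horizontal concatenation. I would organize the argument as three algebraic rewrites, mirroring the three operations that make up $FFN_{FL}$: the inner affine map, the ReLU, and the outer affine map.

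First, I would expand the pre-activation $X[W'_1:W_1] + [b'_1:b_1]$. The block rule for left-multiplication of a column-concatenated matrix gives $X[W'_1:W_1] = [XW'_1 : XW_1]$, and adding the matching column-concatenated bias yields $[XW'_1 + b'_1 \,:\, XW_1 + b_1]$. Second, since ReLU acts coordinatewise, it passes through the column-concatenation to produce $[ReLU(XW'_1+b'_1) \,:\, ReLU(XW_1+b_1)]$. Third, I would apply the complementary block identity $[A:B][C \bot D] = AC + BD$ to the product with $[W'_2 \bot W_2]$, obtaining $ReLU(XW'_1+b'_1)W'_2 + ReLU(XW_1+b_1)W_2$. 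Finally, adding $b_2$ and regrouping the first term as $addFFN(X)$ and the remaining two terms as $FFN(X)$ from Eq.~(\ref{equation:Att}) closes the argument.

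The only thing to be careful about is dimensional bookkeeping, which I would verify explicitly before invoking the block identities: $W'_1 \in \mathbb{R}^{d_m \times d_a}$ and $W_1 \in \mathbb{R}^{d_m \times d_o}$ share the row dimension $d_m$ needed for column-wise concatenation, while $W'_2 \in \mathbb{R}^{d_a \times d_m}$ and $W_2 \in \mathbb{R}^{d_o \times d_m}$ share the column dimension $d_m$ needed for row-wise concatenation, and the partition of the bias $[b'_1:b_1]$ matches the partition of $[W'_1:W_1]$ so that the inner ReLU argument has the expected block structure. Once these alignments are confirmed, there is no conceptual obstacle; the proof is essentially a verification. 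I would expect the main presentation choice, rather than any difficulty, to be whether to state the two block identities as lemmas or to inline them, and I would favor inlining them to keep the derivation a single chain of equalities.
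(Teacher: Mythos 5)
Your proposal is correct and follows essentially the same route as the paper's proof: expand the pre-activation into blocks, pass ReLU through the column-wise concatenation, and use the block identity $[A:B][C \bot D]=AC+BD$ to split the outer product into $addFFN(X)$ plus $FFN(X)$. The only difference is presentational --- you make explicit the entrywise-ReLU commutation and the dimension checks that the paper leaves implicit, which is a minor strengthening rather than a new idea.
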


\begin{proof}
Based on the distributive property of matrix multiplication, we make a further derivation of Eq. (\ref{FNNFP}) as below:
\begin{equation}
\label{equation:derivation}
    FFN_{FL}(X)=ReLU([XW'_1+b'_1:XW_1+b_1]){[W'_2 \bot W_2]}+b_2.
\end{equation}
Let $H$ be the output of the hidden layer in FFN and $H'$ be the output of the addFFN's hidden layer, i.e.,
\begin{equation}
    \label{H}
    H=ReLU(XW_1+b_1),\ \ \ H'=ReLU(XW'_1+b'_1).
\end{equation}
Eq. (\ref{equation:derivation}) can be rewritten as:
\begin{equation}
\label{equation:reformulated}
    FFN_{FL}(X)=[H':H]{[W'_2 \bot W_2]}+b_2
    =H'{W'_2}+H{W_2}+b_2
    =addFFN(X)+FFN(X),
\end{equation}
which proves the theorem.
\end{proof}

In the proof of Theorem \ref{theorem: prefix}, additional hidden units are introduced into FFN as the prefix (at the beginning of the FFN hidden layer). Alternatively, we also have the choices of the infix (in the middle of the FFN hidden layer) and suffix (at the end of the FFN hidden layer) to add hidden units. However, these three cases are equivalent, as stated in the following theorem.

\begin{theorem}
\label{theorem: location}
In FL-tuning, Eq. (\ref{split}) is not affected by the position of $addFFN(\cdot)$. That is, Eq. (\ref{split}) holds regardless of whether $addFFN(\cdot)$ is prefix, infix, or suffix.
\end{theorem}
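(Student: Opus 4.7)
The plan is to reduce Theorem~\ref{theorem: location} to Theorem~\ref{theorem: prefix} by showing that reordering the hidden units of the expanded FFN leaves the output unchanged, so the three placements (prefix, infix, suffix) all produce exactly the same mapping $X \mapsto FFN_{FL}(X)$, which by Theorem~\ref{theorem: prefix} equals $addFFN(X) + FFN(X)$.

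First I would introduce a permutation matrix $P \in \{0,1\}^{(d_o+d_a)\times(d_o+d_a)}$ that encodes the desired reordering of the $d_o + d_a$ hidden units. The infix case corresponds to writing the expanded weight matrix as $[W_1^{(L)} : W'_1 : W_1^{(R)}]$ with $W_1 = [W_1^{(L)} : W_1^{(R)}]$, and the suffix case to $[W_1 : W'_1]$. In each case one can exhibit a $P$ sending the prefix form $[W'_1 : W_1]$ to this target, with the bias $[b'_1:b_1]$ and the rows of $[W'_2 \bot W_2]$ permuted consistently.

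Next, I would invoke the key observation that for any permutation $P$ and any compatible $Z$, $M$,
\begin{equation}
ReLU(ZP)\,P^{T} M \;=\; ReLU(Z)\,(P P^{T})\,M \;=\; ReLU(Z)\,M,
\end{equation}
because $ReLU$ acts element-wise and therefore commutes with column permutation, and $P P^{T}=I$. Applying this with $Z = X[W'_1:W_1] + [b'_1:b_1]$ and $M = [W'_2 \bot W_2]$ shows that inserting $W'_1$, $b'_1$, $W'_2$ at any position in the expanded matrices yields exactly the same value of $FFN_{FL}(X)$ as the prefix placement. Combining this equality with Theorem~\ref{theorem: prefix} applied to the prefix placement then transfers the identity $FFN_{FL}(X) = addFFN(X) + FFN(X)$ to the infix and suffix cases as well.

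The main obstacle is purely notational: one has to set up the infix and suffix versions of $W_1$, $b_1$, and $W_2$ carefully enough that the permutation argument is unambiguous. The underlying mathematical content is simply that the hidden-layer output $\sum_{i=1}^{d_o+d_a} H_i (W_2)_{i,:}$ is a commutative sum, so the order in which the original and additional hidden units appear has no effect on $FFN_{FL}(X)$.
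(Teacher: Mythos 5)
Your proposal is correct, but it takes a different route from the paper. The paper proves Theorem~\ref{theorem: location} by a three-way case analysis: for the infix case it splits $W_1=[W_{1p}:W_{1s}]$, $b_1=[b_{1p}:b_{1s}]$, $W_2=[W_{2p}\bot W_{2s}]$, expands the block product $[H_{1p}:H':H_{1s}][W_{2p}\bot W'_2\bot W_{2s}]$ term by term, and regroups it into $H'W'_2+HW_2$; the suffix case repeats the prefix computation with the blocks swapped. You instead prove a single invariance lemma: any placement of the added units is obtained from the prefix placement by a column permutation $P$ of the expanded first-layer weights and bias together with the consistent row permutation $P^T$ of the expanded second-layer weights, and since $ReLU$ is element-wise it satisfies $ReLU(ZP)=ReLU(Z)P$, so $ReLU(ZP)\,P^{T}M=ReLU(Z)M$ and the output is unchanged; Theorem~\ref{theorem: prefix} then carries over verbatim. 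Both arguments rest on the same underlying fact that the hidden-layer contribution is a commutative sum over units, but yours buys generality and economy: it handles not only contiguous prefix, infix, and suffix insertions but any interleaving of the added units, and it avoids redoing the block expansion for each case, at the cost of introducing the permutation-matrix bookkeeping (and the implicit consistency condition that the split of $W_2$ matches that of $W_1$ and $b_1$, which the paper makes explicit in its infix case). The paper's direct expansion is more elementary and self-contained but strictly less general.
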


\begin{proof} The proof of this theorem is divided into the following three cases:

\textit{\textbf{Prefix:}} In the proof of Theorem \ref{theorem: prefix}, we have ${FFN_{FL}}(X)={addFFN}(X)+{FFN}(X)$.

\textit{\textbf{Infix:}} In this case, $W_1$, $b_1$, and ${W_2}$ can be split into $[W_{1p}:W_{1s}]$, $[b_{1p}:b_{1s}]$, and $[W_{2p} \bot W_{2s}]$. Then, we formulate FL-tuning in the infix form as follows:
\begin{equation}
\begin{aligned}
    FFN_{FL}(X)=&ReLU(X[W_{1p}:W'_1:W_{1s}]+[b_{1p}:b'_1:b_{1s}]){[W_{2p} \bot W'_2 \bot W_{2s}]}+b_2.  \\
    =&[H_{1p}:H':H_{1s}]{[W_{2p} \bot W'_2 \bot W_{2s}]}+b_2 =H_{1p}{W_{2p}}+H'{W'_2}+H_{1s}{W_{2s}}+b_2    \\
    =&H'{W'_2}+[H_{1p}:H_{1s}]{[W_{2p} \bot W_{2s}]}+b_2 =H'{W'_2}+H{W_2}+b_2    \\
    =&addFFN(X)+FFN(X).
\end{aligned}
\label{equation:Infix}
\end{equation}
\textbf{\emph{Suffix}:} The proof of this case is similar to that of Theorem \ref{theorem: prefix}. That is,
\begin{equation}
\begin{aligned}
    FFN_{FL}(X)=&ReLU(X[W_1:W'_1][b_1:b'_1]){[W_2 \bot W'_2]}+b_2 \\ =&[H:H']{[W_2 \bot W'_2]}+b_2
    =H{W_2}+H'{W'_2}+b_2    \\
    =&FFN(X)+addFFN(X)
\end{aligned}
\label{equation:Suffix}
\end{equation}
In summary, Eq. (\ref{split}) is not influenced by the position of $addFFN(\cdot)$.
\end{proof}




\section{Experiments}
\label{section:Experiments}

In this section, we investigate our tuning method over three PLMs and seven NLU tasks. The experimental results and detailed analysis demonstrate the superior performance, stable training, and faster convergence of FL-tuning.

\subsection{Experimental Setup}
\label{subsection:dataset}

\textbf{Backbones.} We perform experiments on three Transformer-based PLMs, including RoBERTa~\cite{DBLP:journals/corr/abs-1907-11692}, NEZHA~\cite{wei2019nezha}, and RoFormer~\cite{su2021roformer}. We carefully tune the hyperparameters in experiments based on RoBERTa and apply the optimal ones to NEZHA and RoFormer. Hyperparameters used in fine-tuning, P-tuning v1~\cite{liu2021gpt}, P-tuning v2~\cite{liu2021p}, and FL-tuning over the three PLMs are shown in Table \ref{tab:hyperparameters}.

\begin{table}[t]
\caption{Hyperparameters (BS: batch size; SL: sequence length; LR: learning rate; Epo: epoch) used in PLMs' tuning methods on CLUE benchmark and the statistics of CLUE.}
\label{tab:hyperparameters}
\centering
\resizebox{\textwidth}{!}
{
    \begin{tabular}{l|cccc|cccc|cccc|cccc|ccc}
    \toprule[2pt]
    \multirow{2}{*}{}  & \multicolumn{4}{c}{\textbf{Fine-tuning}}   & \multicolumn{4}{c}{\textbf{P-tuning v1}}   & \multicolumn{4}{c}{\textbf{P-tuning v2}}    & \multicolumn{4}{c|}{\textbf{FL-tuning}}  &\multirow{2}{*}{Train} &\multirow{2}{*}{Dev}  &\multirow{2}{*}{Test}   \\ 
    & BS    & SL   & LR    & Epo     & BS    & SL   & LR    & Epo     & BS    & SL   & LR    & Epo  & BS    & SL   & LR    & Epo   \\ 
    \midrule[2pt]
    IFLYTEK & 32    & 128   & 1e-5  & 30    & 128   & 128   & 6e-4  & 40       & 32    & 128   & 2e-4  & 50    & 32    & 128   & 5e-5  & 30    & 12.1K   & 2.5K  & 2.6K    \\
    TNEWS   & 32    & 128   & 1e-5  & 40    & 128   & 128   & 4e-4  & 30       & 32    & 128   & 1e-3  & 80    & 32    & 128   & 5e-5  & 30    & 266K    & 57K   & 57K     \\
    WSC & 32    & 128   & 5e-5  & 50    & 128   & 128   & 6e-4  & 20       & 32    & 128   & 3e-2  & 80    & 32    & 128   & 3e-4  & 40    & 532 & 104   & 143     \\
    AFQMC & 32  & 128   & 1e-5  & 15    & 128   & 128   & 6e-4  & 30        & 32    & 128   & 8e-3  & 30    & 32    & 128   & 2e-4  & 10    & 34.3K   & 4.3K  & 3.8K  \\
    CMNLI   & 32    & 128   & 1e-5  & 10    & 128   & 128   & 8e-4  & 40    & 32    & 128   & 1e-2  & 100   & 32    & 128   & 1e-4  & 30    & 391.7K  & 12.4K   & 13.8K \\
    OCNLI   & 32    & 128   & 1e-5  & 30    & 128   & 128   & 4e-4  & 80        & 32    & 128   & 1e-2  & 150   & 32    & 128   & 5e-4  & 30    & 50K & 3K    & 3K    \\
    CSL & 32    & 128   & 1e-5  & 30    & 32    &128    & 128   & 6e-4  & 50        & 128   & 9e-3  & 50    & 32    & 128   & 8e-5  & 100   & 532 & 104   & 143   \\
    CLUENER  & 32   & 256   & 2e-5  & 30    & -  & -  & -  & -      & 32    & 256   & 8e-2  & 50    & 32    & 256   & 1e-4  & 30    & 10.7K   & 1.3K  & 1.3K    \\
    C3   & 4 & 512   & 1e-5  & 50       & 16    & 512   & 1e-3  & 50      & 4 & 512   & 1e-2  & 100   & 4 & 512   & 1e-4  & 50    & 11.8K   & 3.8K   & 3.8K  \\
    CHID    & 16    & 64    & 1e-5  & 30    & 128   & 64   & 4e-4   & 30       & 16    & 64    & 2e-3  & 80    & 16    & 64    & 8e-5  & 30    & 84.7K   & 3.2K  & 3.2K   \\
    CMRC2018    & 16    & 512   & 5e-5  & 10    & -  & -  & -  & -       & 16    & 512   & 3e-2  & 50    & 16    & 512   & 2e-4  & 20  & 12.5K & 1.2K  & 4K   \\
    \bottomrule[2pt]
    \end{tabular}
}
\end{table}

\begin{table}[htpb]
\caption{Comparison results of different tuning methods on TC task. We highlight the best of FT, PV1, PV2, and FL in dark gray and use light gray for the next best. If FL outperforms PV2, the improvement over PV2 is reflected in red below. The difference between TNEWS 1.0 and TNEWS 1.1 is that their validation sets are different.}
\label{tab:TC}
\centering
\resizebox{\textwidth}{!}
{
    \begin{tabular}{lcccccccccccccc}
    \toprule[2pt]
    \multirow{3}{*}{}  & \multicolumn{14}{c}{\textbf{Text Classification (TC)}}  \\ 
    \cmidrule(r){2-15}
    & \multicolumn{4}{c}{IFLYTEK} &  & \multicolumn{4}{c}{TNEWS 1.0}    &    & \multicolumn{4}{c}{TNEWS 1.1}  \\

    & FT    & PV1    & PV2   & FL    &     & FT   & PV1   & PV2   & FL   &     & FT    & PV1    & PV2   & FL      \\ 
    \midrule[2pt]
    RoBERTa & 61.000    & 54.310  & \cellcolor{graywhite}61.730 & \cellcolor{lightgray}62.150     &          & \cellcolor{graywhite}57.170   & 55.700    & 56.770  & \cellcolor{lightgray}57.510       &              & 56.960    & 55.370   & \cellcolor{graywhite}58.130    & \cellcolor{lightgray}59.750     \\

    &   &   & \cellcolor{graywhite} & \cellcolor{lightgray}\textcolor{red}{(+0.420)}        &        & \cellcolor{graywhite}     &   &   & \cellcolor{lightgray}\textcolor{red}{(+0.740)}       &          &  &    & \cellcolor{graywhite}    & \cellcolor{lightgray}\textcolor{red}{(+1.620)}    \\

    NEZHA  & 59.080     & 55.650   & \cellcolor{lightgray}62.810    & \cellcolor{graywhite}60.770     &      & \cellcolor{lightgray}58.510   & 56.360   & \cellcolor{graywhite}58.190    & 57.940           &           & 58.990      & 58.440      & \cellcolor{lightgray}61.300    & \cellcolor{graywhite}60.590 \\

    &   &   & \cellcolor{lightgray}    & \cellcolor{graywhite}                   &                 & \cellcolor{lightgray}   &   & \cellcolor{graywhite}    &                      &                &     &   & \cellcolor{lightgray}    & \cellcolor{graywhite}   \\

    RoFormer & 61.000   & 54.620     & \cellcolor{graywhite}61.620  & \cellcolor{lightgray}62.000       &      & \cellcolor{lightgray}57.840     & 55.370   & \cellcolor{graywhite}57.380  &57.070          &               & 55.210  & 54.530   & \cellcolor{lightgray}58.100  & \cellcolor{graywhite}57.700  \\

    &   &   & \cellcolor{graywhite}  & \cellcolor{lightgray}\textcolor{red}{(+0.380)}      &         & \cellcolor{lightgray}     &    & \cellcolor{graywhite}  &                          &                 &     &    & \cellcolor{lightgray}  & \cellcolor{graywhite}       \\
    \bottomrule[2pt]
    \end{tabular}
}
\end{table}

\begin{table}[!htpb]
\caption{Comparison results of different tuning methods on PD and NER tasks. FL-tuning achieves the best performance on the two tasks. In particular, it improves accuracy by 12.76\% and 17.93\% on WSC 1.0 over FT and PV2, respectively.}
\label{tab:PDandNER}
\centering
\resizebox{\textwidth}{!}
{
    \begin{tabular}{lcccccccccccccc}
    \toprule[2pt]
    \multirow{3}{*}{}   & \multicolumn{9}{c}{\textbf{Pronoun Disambiguation (PD)}}  &  & \multicolumn{4}{c}{\textbf{Name Entity Recognition (NER)}} \\ 
    \cline{2-10} \cline{12-15}
    & \multicolumn{4}{c}{WSC 1.0}   &  & \multicolumn{4}{c}{WSC 1.1}     &   & \multicolumn{4}{c}{CLUENER}  \\

    & FT  & PV1  & PV2   & FL            &              & FT  & PV1    & PV2   & FL          &     & FT  & PV1   & PV2   & FL  \\ 
    \midrule[2pt]
    RoBERTa     & \cellcolor{graywhite}80.690   & 78.620     & 79.660 & \cellcolor{lightgray}82.070            &                & 78.240    & \cellcolor{graywhite}79.910     & 78.590 & \cellcolor{lightgray}80.540             &             & \cellcolor{graywhite}79.596    & -     & 79.583 & \cellcolor{lightgray}80.479 \\

    & \cellcolor{graywhite}     &   &  & \cellcolor{lightgray}\textcolor{red}{(+2.410)}         &                &  & \cellcolor{graywhite} &     & \cellcolor{lightgray}\textcolor{red}{(+1.950)}                   &             & \cellcolor{graywhite}   &       &  & \cellcolor{lightgray}\textcolor{red}{(+0.896)} \\

    NEZHA       & \cellcolor{graywhite}68.620   &   61.380   & 63.450   & \cellcolor{lightgray}81.380         &                & 66.080     & 51.630   & \cellcolor{graywhite}69.810    & \cellcolor{lightgray}78.980             &             & \cellcolor{graywhite}80.636   & -   & 80.296    & \cellcolor{lightgray}80.866   \\

    & \cellcolor{graywhite}     &    &    & \cellcolor{lightgray}\textcolor{red}{(+17.930)}     &               &   &      & \cellcolor{graywhite}    & \cellcolor{lightgray}\textcolor{red}{(+9.170)}                   &             & \cellcolor{graywhite}  &     &     & \cellcolor{lightgray}\textcolor{red}{(+0.570)}    \\

    RoFormer    & 71.720    & 62.410  & \cellcolor{graywhite}75.170    & \cellcolor{lightgray}78.970        &              & \cellcolor{graywhite}74.510    & 56.370   & 71.520  & \cellcolor{lightgray}81.240            &             & \cellcolor{graywhite}79.395   & -   & 79.229  & \cellcolor{lightgray}80.839  \\

    &   &    & \cellcolor{graywhite}  & \cellcolor{lightgray}\textcolor{red}{(+3.800)}         &       & \cellcolor{graywhite}  &      &   & \cellcolor{lightgray}\textcolor{red}{(+9.720)}                      &             & \cellcolor{graywhite}  &   &   & \cellcolor{lightgray}\textcolor{red}{(+1.610)}   \\
    \bottomrule[2pt]
    \end{tabular}
}
\end{table}

\begin{table}[ht]
\caption{Comparison results of different tuning methods on SS, NLI, and KR tasks. FL-tuning is better than prompt tuning methods. Although FT slightly outperforms FL, the number of trainable parameters in FL is only about 3\% of Transformer's parameters.}
\label{tab:SSandNLIandKR}
\centering
\resizebox{\textwidth}{!}
{
\begin{tabular}{lccccccccccccccccccc}
\toprule[2pt]
\multirow{3}{*}{}  & \multicolumn{4}{c}{\textbf{Semantic Similarity (SS)}}    &  & \multicolumn{9}{c}{\textbf{Natural Language Inference (NLI)}} & & \multicolumn{4}{c}{\textbf{Keyword Recognition (KR)}}        \\ 
\cline{2-5} \cline{7-15}  \cline{17-20}
& \multicolumn{4}{c}{AFQMC} &   & \multicolumn{4}{c}{CMNLI}     &   & \multicolumn{4}{c}{OCNLI}  & & \multicolumn{4}{c}{CSL}   \\

& FT  & PV1  & PV2   & FL      &     & FT   & PV1   & PV2   & FL    &     & FT     &    PV1  & PV2   & FL    &    & FT  & PV1     & PV2   & FL \\ 
\midrule[2pt]
RoBERTa & \cellcolor{graywhite}72.570   & 69.850 & 72.030 & \cellcolor{lightgray}73.370            &             & \cellcolor{lightgray}80.430  & 67.470 & 79.990 & \cellcolor{graywhite}80.370      &           & \cellcolor{graywhite}72.900  & 64.630  & 72.370 & \cellcolor{lightgray}73.530                 &                & \cellcolor{lightgray}85.330     & 81.000   & 84.000    & \cellcolor{graywhite}84.930 \\
& \cellcolor{graywhite} &  &  & \cellcolor{lightgray}\textcolor{red}{(+1.340)}          &                & \cellcolor{lightgray}    &  &  & \cellcolor{graywhite}\textcolor{red}{(+0.380)} &         & \cellcolor{graywhite}    &   &  & \cellcolor{lightgray}\textcolor{red}{(+1.160)}          &                & \cellcolor{lightgray}   &   &     & \cellcolor{graywhite}\textcolor{red}{(+0.930)}\\

NEZHA & 73.970  & 69.770  & \cellcolor{lightgray}74.330    & \cellcolor{graywhite}74.180      &                & \cellcolor{lightgray}81.450    & 71.950  & 80.160    & \cellcolor{graywhite}81.160        &     & \cellcolor{lightgray}75.300  & 67.600   & 74.100    & \cellcolor{graywhite}74.470      &                & 84.500  & 82.130  & \cellcolor{lightgray}84.900    & \cellcolor{graywhite}84.670\\
&  &  & \cellcolor{lightgray}    & \cellcolor{graywhite}           &                & \cellcolor{lightgray} &   &     & \cellcolor{graywhite}\textcolor{red}{(+1.000)}    &            & \cellcolor{lightgray}  &  &     & \cellcolor{graywhite}\textcolor{red}{(+0.370)}            &               &  &   & \cellcolor{lightgray}    & \cellcolor{graywhite} \\

RoFormer & \cellcolor{lightgray}73.820  & 69.960   & 71.920  & \cellcolor{graywhite}72.990           &                  & \cellcolor{lightgray}81.060  & 64.840  & 78.770  & \cellcolor{graywhite}80.920     &      & \cellcolor{lightgray}73.970   & 63.670   & 71.800  & \cellcolor{graywhite}72.100              &                  & \cellcolor{graywhite}84.130    & 82.730   & 83.830  & \cellcolor{lightgray}85.030\\
& \cellcolor{lightgray}  &   &   & \cellcolor{graywhite}\textcolor{red}{(+1.070)}             &                  & \cellcolor{lightgray}  &  &   & \cellcolor{graywhite}\textcolor{red}{(+2.150)}              &             & \cellcolor{lightgray}  &  &   & \cellcolor{graywhite}\textcolor{red}{(+0.300)}                &               & \cellcolor{graywhite}    &    &   & \cellcolor{lightgray}\textcolor{red}{(+1.200)}\\
\bottomrule[2pt]
\end{tabular}
}
\end{table}

\begin{table}[t]
\caption{Comparison results of different tuning methods on MRC task. FL significantly outperforms PV2 in almost all cases and is comparable with FT.}
\label{tab:RC}
\centering
\resizebox{\textwidth}{!}
{
    \begin{tabular}{lccccccccccccccccccc}
    \toprule[2pt]
    \multirow{3}{*}{}   & \multicolumn{19}{c}{\textbf{Machine Reading Comprehension (MRC)}} \\ 
    \cline{2-20}
    & \multicolumn{4}{c}{C3 1.0}    &  & \multicolumn{4}{c}{C3 1.1} &  & \multicolumn{4}{c}{CHID}   &    & \multicolumn{4}{c}{CMRC2018}  \\

    & FT    & PV1   & PV2   & FL    &    & FT    & PV1   & PV2   & FL   &    & FT     & PV1   & PV2   & FL    &    & FT    & PV1   & PV2   & FL   \\ 
    \midrule[2pt]
    RoBERTa & \cellcolor{lightgray}67.650   & 50.800    & 64.210 & \cellcolor{graywhite}67.060  &   & \cellcolor{graywhite}73.600    & 48.000 & 70.090 & \cellcolor{lightgray}74.340    &    & \cellcolor{lightgray}87.544   & 50.605   & 82.981  & \cellcolor{graywhite}87.246     &    & \cellcolor{graywhite}77.500     & -   & 76.300    & \cellcolor{lightgray}77.900   \\
    & \cellcolor{lightgray} &  &   & \cellcolor{graywhite}\textcolor{red}{(+2.850)}     &     & \cellcolor{graywhite} &  &   & \cellcolor{lightgray}\textcolor{red}{(+4.250)}   &     & \cellcolor{lightgray} &  &   & \cellcolor{graywhite}\textcolor{red}{(+4.265)}   &     & \cellcolor{graywhite}   &  &    & \cellcolor{lightgray}\textcolor{red}{(+1.600)}    \\

    NEZHA & \cellcolor{graywhite}70.990     & 54.550    & 69.810    & \cellcolor{lightgray}71.220   &      & \cellcolor{lightgray}76.120  & 50.220    & 73.290    & \cellcolor{graywhite}73.540     &     & \cellcolor{lightgray}89.250  & 63.911   & 85.557    & \cellcolor{graywhite}88.961   &     & \cellcolor{lightgray}76.350   & -  & 72.650    & \cellcolor{graywhite}75.900   \\
    & \cellcolor{graywhite}     &  &   & \cellcolor{lightgray}\textcolor{red}{(+1.410)}     &     & \cellcolor{lightgray}   &  &   & \cellcolor{graywhite}\textcolor{red}{(+0.250)}     &     & \cellcolor{lightgray}  &  &   & \cellcolor{graywhite}\textcolor{red}{(+3.404)}  &    & \cellcolor{lightgray}    &   &   & \cellcolor{graywhite}\textcolor{red}{(+3.250)}    \\

    RoFormer & \cellcolor{lightgray}66.650  & 53.030    & 60.380  & \cellcolor{graywhite}65.930     &     & \cellcolor{graywhite}72.550  & 51.320  & 67.880     & \cellcolor{lightgray}73.720   &    & \cellcolor{graywhite}86.759     & 54.276   & 81.916  & \cellcolor{lightgray}86.953  &  & \cellcolor{graywhite}73.500     & -   & \cellcolor{lightgray}74.600  & 72.050  \\
    & \cellcolor{lightgray}     &   &   & \cellcolor{graywhite}\textcolor{red}{(+5.550)}   &    & \cellcolor{graywhite}  &   &   & \cellcolor{lightgray}\textcolor{red}{(+5.840)}     &     & \cellcolor{graywhite}     &   &   & \cellcolor{lightgray}\textcolor{red}{(+5.037)}     &     & \cellcolor{graywhite}     &   & \cellcolor{lightgray}  &       \\
    \bottomrule[2pt]
    \end{tabular}
}
\end{table}

\begin{table}[!ht]
\caption{Comparison results of layer/prompt tuning methods under few-shot setting. Our FL-tuning achieves the best results in almost all cases. In particular, it improves F1 of CLUENER by more than 10\% on average over PV2.}
\label{tab:Few-shot}
\centering
\resizebox{\textwidth}{!}
{
    \begin{tabular}{ccccccccccccccccc}
    \toprule[2pt]
    \multirow{2}{*}{}   & \multicolumn{3}{c}{\textbf{TNEWS 1.0}}    &    &  \multicolumn{3}{c}{\textbf{CMNLI}}    &   & \multicolumn{3}{c}{\textbf{CLUENER}}  &    & \multicolumn{3}{c}{\textbf{CHID}}\\ 
    \cline{2-4} \cline{6-8} \cline{10-12} \cline{14-16} 

    & PV1   & PV2   & FL   &   & PV1   & PV2   & FL   &   & PV1   & PV2   & FL &   & PV1   & PV2   & FL  \\ 
    \midrule[2pt]

    20  & 42.990    & \cellcolor{graywhite}48.730      & \cellcolor{lightgray}49.150           &  & 34.060    & \cellcolor{graywhite}44.790      & \cellcolor{lightgray}44.900          &       & -     & \cellcolor{graywhite}23.211  & \cellcolor{lightgray}27.525        &                 & 51.226   & \cellcolor{graywhite}52.131  & \cellcolor{lightgray}52.811       \\
    
    &   &  \cellcolor{graywhite}    & \cellcolor{lightgray}\textcolor{red}{(+0.420)}   & &   & \cellcolor{graywhite}    & \cellcolor{lightgray}\textcolor{red}{(+0.11)} &  &   -   &   \cellcolor{graywhite}    & \cellcolor{lightgray}\textcolor{red}{(+4.314)}   &   &   & \cellcolor{graywhite}    & \cellcolor{lightgray}\textcolor{red}{(+0.680)}   \\


    40  & 45.410    & \cellcolor{graywhite}49.440      & \cellcolor{lightgray}49.720           &  & 34.250    & \cellcolor{graywhite}50.040      & \cellcolor{lightgray}51.000      &       & -     & \cellcolor{graywhite}25.347  & \cellcolor{lightgray}41.489          &                 & 51.937   & \cellcolor{graywhite}52.674  & \cellcolor{lightgray}52.842         \\

    &   &  \cellcolor{graywhite}    & \cellcolor{lightgray}\textcolor{red}{(+0.280)}   & &   & \cellcolor{graywhite}    & \cellcolor{lightgray}\textcolor{red}{(+0.960)} &  &   -   &   \cellcolor{graywhite}    & \cellcolor{lightgray}\textcolor{red}{(+16.142)}   &   &   & \cellcolor{graywhite}    & \cellcolor{lightgray}\textcolor{red}{(+0.168)}   \\
    

    60  & 43.350    & \cellcolor{graywhite}48.980      & \cellcolor{lightgray}49.110           &  & 34.300    & \cellcolor{graywhite}49.790      & \cellcolor{lightgray}50.490     &       & -     & \cellcolor{graywhite}35.788  & \cellcolor{lightgray}46.295     &       & 52.674   & \cellcolor{graywhite}52.898  & \cellcolor{lightgray}52.932       \\
    
    &   &  \cellcolor{graywhite}    & \cellcolor{lightgray}\textcolor{red}{(+0.130)}   & &   & \cellcolor{graywhite}    & \cellcolor{lightgray}\textcolor{red}{(+0.700)} &  &   -   &   \cellcolor{graywhite}    & \cellcolor{lightgray}\textcolor{red}{(+10.507)}   &   &   & \cellcolor{graywhite}    & \cellcolor{lightgray}\textcolor{red}{(+0.034)}   \\


    80  & 43.780    & \cellcolor{graywhite}49.370      & \cellcolor{lightgray}49.510 &     & 35.980    & \cellcolor{graywhite}50.190      & \cellcolor{lightgray}52.590       &       & -     & \cellcolor{graywhite}40.962  & \cellcolor{lightgray}53.575              &                 & \cellcolor{lightgray}52.850   & \cellcolor{graywhite}52.648  & 52.609    \\
    
    &   &  \cellcolor{graywhite}    & \cellcolor{lightgray}\textcolor{red}{(+0.140)}   & &   & \cellcolor{graywhite}    & \cellcolor{lightgray}\textcolor{red}{(+2.400)} &  &   -   &   \cellcolor{graywhite}    & \cellcolor{lightgray}\textcolor{red}{(+12.613)}   &   & \cellcolor{lightgray}  & \cellcolor{graywhite}    &    \\


    100 & 47.850    & \cellcolor{graywhite}49.660      & \cellcolor{lightgray}50.030           &  & 35.520    & \cellcolor{graywhite}51.310      & \cellcolor{lightgray}52.740       &       & -     & \cellcolor{graywhite}48.093  & \cellcolor{lightgray}59.226           &                 & 52.454   & \cellcolor{graywhite}52.100  & \cellcolor{lightgray}52.794       \\
    
    &   &  \cellcolor{graywhite}    & \cellcolor{lightgray}\textcolor{red}{(+0.370)}   & &   & \cellcolor{graywhite}    & \cellcolor{lightgray}\textcolor{red}{(+1.430)} &  &   -   &   \cellcolor{graywhite}    & \cellcolor{lightgray}\textcolor{red}{(+11.133)}   &   &   & \cellcolor{graywhite}    & \cellcolor{lightgray}\textcolor{red}{(+0.694)}   \\

    \bottomrule[2pt]
    \end{tabular}
}    
\end{table}

\textbf{Baselines.} We compare our FL-tuning (FL) with fine-tuning (FT), P-tuning v1 (PV1)~\cite{liu2021gpt}, and P-tuning v2 (PV2)~\cite{liu2021p}. The prompt length in both PV1 and PV2 are set to 160, and the number of added hidden units of each FFN layer in FL is also pre-defined as 160. FT's results are obtained by tuning all the Transformer's parameters without prompts. Results of PV1, PV2, and FL are obtained by freezing the original parameters of the Transformer and only tuning the introduced trainable parameters.

\textbf{Datasets.} 
We use NLU text datasets from the CLUE\footnote{https://www.cluebenchmarks.com/}~\cite{DBLP:conf/coling/XuHZLCLXSYYTDLS20} benchmark as experimental data, which is similar to GLUE~\cite{DBLP:conf/iclr/WangSMHLB19} and SuperGLUE~\cite{DBLP:conf/nips/WangPNSMHLB19}. We select 11 NLU datasets from CLUE for the comparison experiments. The scale of each dataset is shown in Table \ref{tab:hyperparameters}. The selected 11 text datasets are divided into 7 kinds of NLU tasks, including Text Classification (TC), Pronoun Disambiguation (PD), Semantic Similarity (SS), Natural Language Inference (NLI), Keyword Recognition (KR), Name Entity Recognition (NER), and Machine Reading Comprehension (MRC).

\textbf{Evaluation metrics.}
We report F1 (\%) as the evaluation metric for the NER task and Accuracy (\%) for other NLU tasks. All comparison results are obtained after submitting to CLUE and the detailed analysis is based on the parameters of the submitted models.


\subsection{Comparison Results Across Tasks}

Similar to CLUE~\cite{DBLP:conf/coling/XuHZLCLXSYYTDLS20}, we divide the selected seven kinds of NLU tasks into four categories: Single-Sentence Tasks, Sentence Pair Tasks, Name Entity Recognition Tasks, and Machine Reading Comprehension Tasks.


\textbf{Single-Sentence Tasks} include TC and PD.  We adopt the IFLYTEK dataset for long TC, TNEWS for short TC, and WSC for PD. The experimental results on TC and PD tasks are shown in Table \ref{tab:TC} and Table \ref{tab:PDandNER}, respectively. From the table, we observe that FL-tuning outperforms fine-tuning and prompt tuning in most cases, indicating the effectiveness of our method on TC and PD tasks. In particular, the improvement of NEZHA-based FL-tuning on PD task is very obvious, and its accuracy is 12.76\% and 17.93\% higher than that of fine-tuning and P-tuning v2. The reason may be that the training dataset of WSC is relatively small and FL-tuning is more suitable for this scenario than other methods. To verify this argument, we supplement the experiments under few-shot settings later in this subsection.



\textbf{Sentence Pair Tasks} aim to predict relations between sentence pairs, or abstract-keyword pairs, including SS (AFQMC), NLI (CMNLI and OCNLI), and KR (CSL). Table \ref{tab:SSandNLIandKR} shows the results on SS, NLI, and KR. According to the results, we conclude that FL-tuning is better than prompt tuning in almost all cases. In particular, it achieves a 2.15\% improvement over 78.77\% obtained by RoFormer-based PV2. Although FT's results are slightly better than ours, the number of learnable parameters in FL-tuning is only about 3\% of Transformer's parameters, which greatly reduces the training cost.


\textbf{Name Entity Recognition Tasks} experiment on the CLUE Fine-Grain NER (CLUENER) dataset and the results are shown in Table \ref{tab:PDandNER}. 
From the table, we observe that FL-tuning performs best among tuning methods in all cases, which further demonstrates the effectiveness of our PLMs' tuning method.


\textbf{Machine Reading Comprehension Tasks} include C3, CHID, and CMRC2018 datasets. The results are shown in Table \ref{tab:RC}. According to the results, we find that the performance of FL-tuning on MRC tasks is similar to that of Sentence Pair tasks. Although it is not as good as fine-tuning in most cases, it significantly outperforms prompt tuning methods. This verifies that layer tuning is more effective than prompt tuning.


\textbf{Few-shot Setting.}
In addition to the comparison on the full data, we use RoBERTa as the backbone to compare FL-tuning with prompt tuning methods under few-shot setting. We set the training set to be $\{20, 40, 60, 80, 100\}$. The experimental results on TNEWS 1.0, CMNLI, CLUENER, and CHID datasets are reported in Table \ref{tab:Few-shot}. It can be found that FL-tuning is better than PV1 and PV2 in almost all cases, indicating that our method is also suitable for few-shot learning. In particular, our FL-tuning improves F1 by more than 10\% over PV2 in almost all training sample size settings on the CLUENER dataset.

\begin{figure}[t]
\centering
\resizebox{\textwidth}{!}
{
    \subfigure[WSC]{
        \begin{minipage}[t]{0.25\linewidth}
        \centering
        \includegraphics[width=1\textwidth]{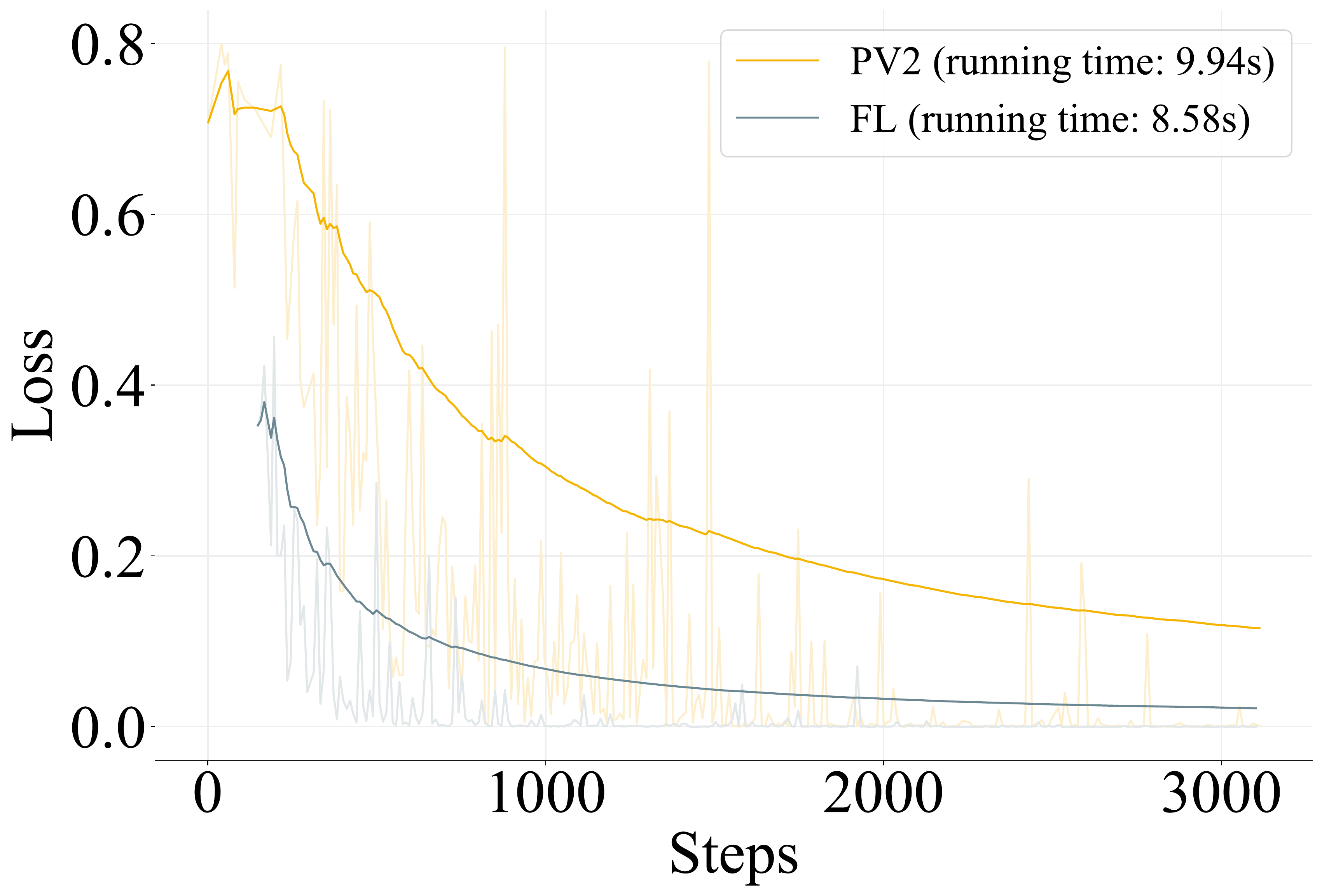}
        \label{fig:loss-WSC}
        \end{minipage}
    }
    \subfigure[OCNLI]{
        \begin{minipage}[t]{0.25\linewidth}
        \centering
        \includegraphics[width=1\textwidth]{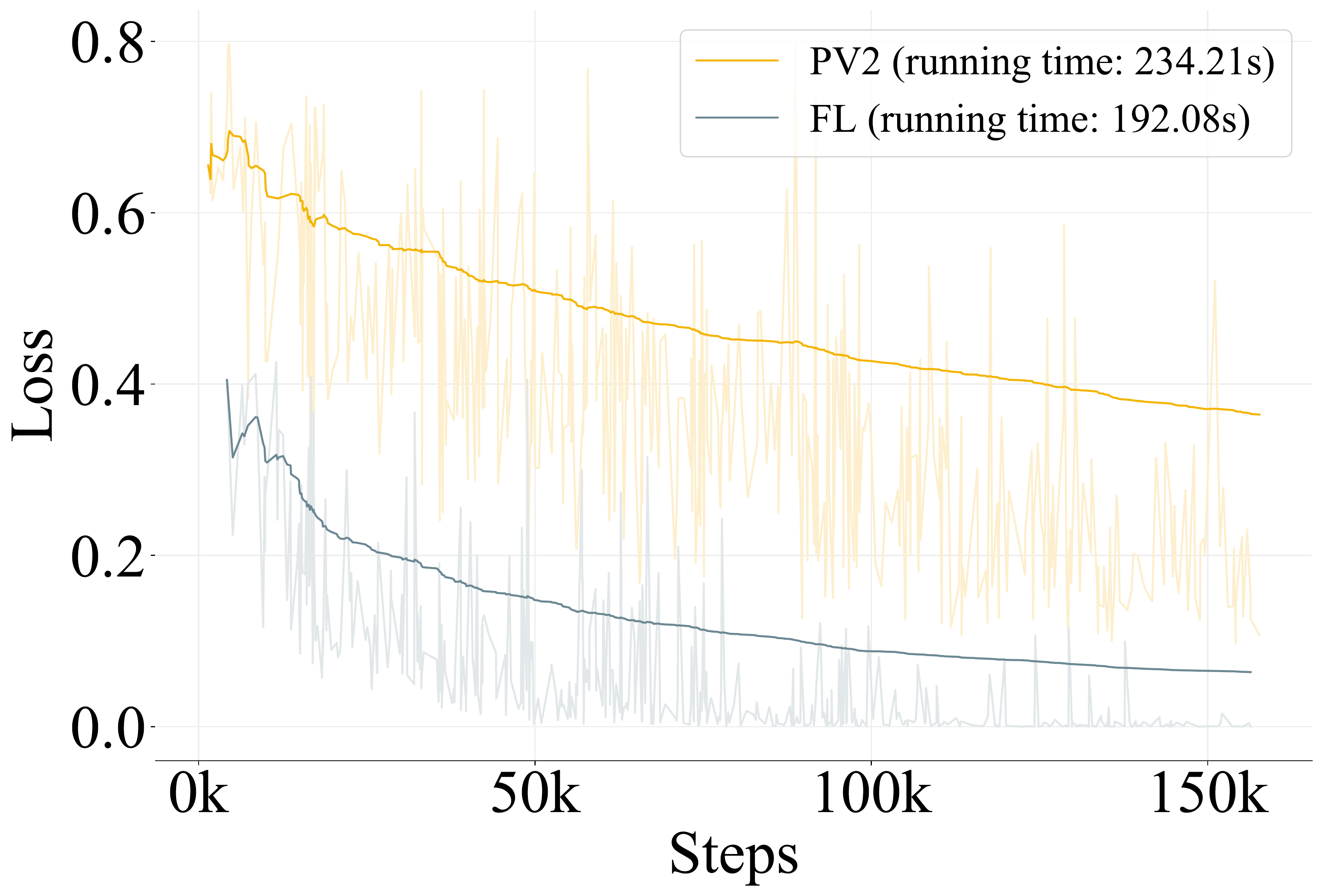}
        \label{fig:loss-OCNLI}
        \end{minipage}
    }
    \subfigure[CLUENER]{
        \begin{minipage}[t]{0.25\linewidth}
        \centering
        \includegraphics[width=1\textwidth]{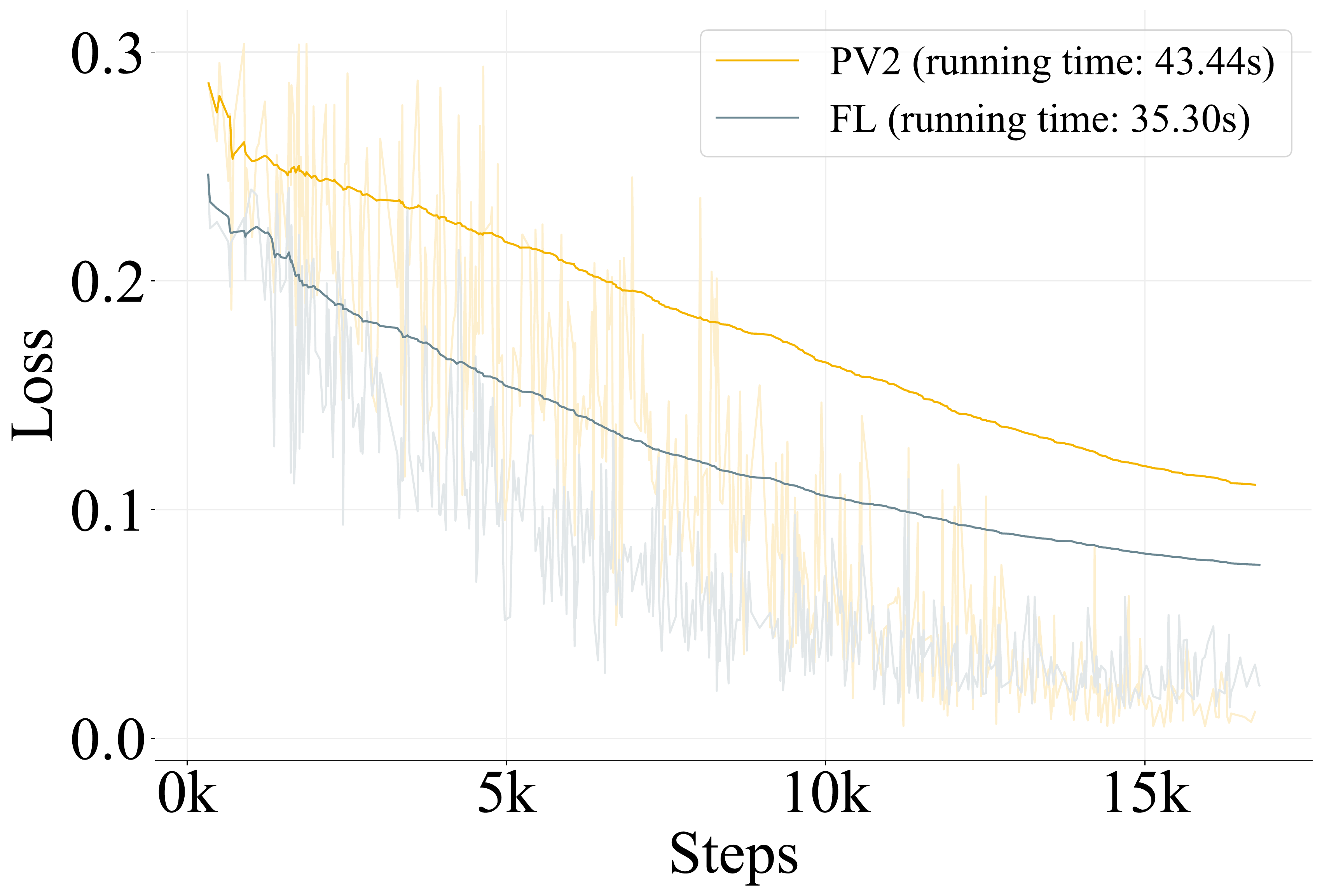}
        \label{fig:loss-CLUENER}
        \end{minipage}
    }
    \subfigure[CMRC2018]{
        \begin{minipage}[t]{0.25\linewidth}
        \centering
        \includegraphics[width=1\textwidth]{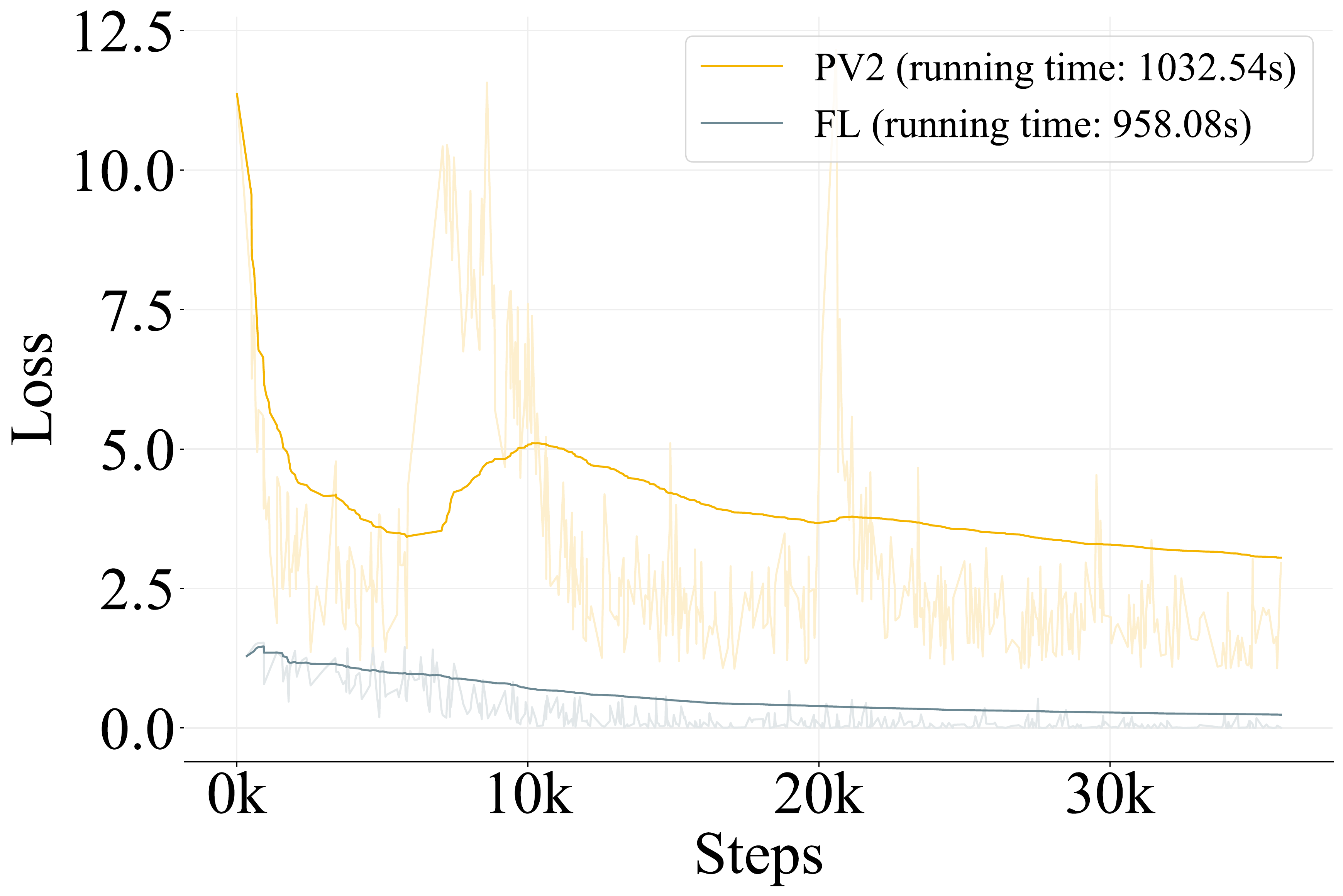}
        \label{fig:loss-CMRC}
        \end{minipage}
    }
}
\caption{Convergence comparison of FL-tuning and P-tuning v2. The curves in dark color are obtained by smoothing the loss curves (light color). The smoothing function is $\alpha * previousStep\_smoothed\_value +(1-\alpha)*current\_value$, where $\alpha=0.99$ is the smooth weight. The models run on Ubuntu 20.04 with Intel Xeon Gold 6248R*2, NVIDIA V100 and 512G of RAM. FL-tuning is more stable and converges faster than PV2.}
\label{fig:loss}
\end{figure}

\begin{figure}[t]
\centering
\resizebox{\textwidth}{!}
{
    \subfigure[WSC (P-tuning v2)]{
        \begin{minipage}[t]{0.25\linewidth}
        \centering
        \includegraphics[width=1\textwidth]{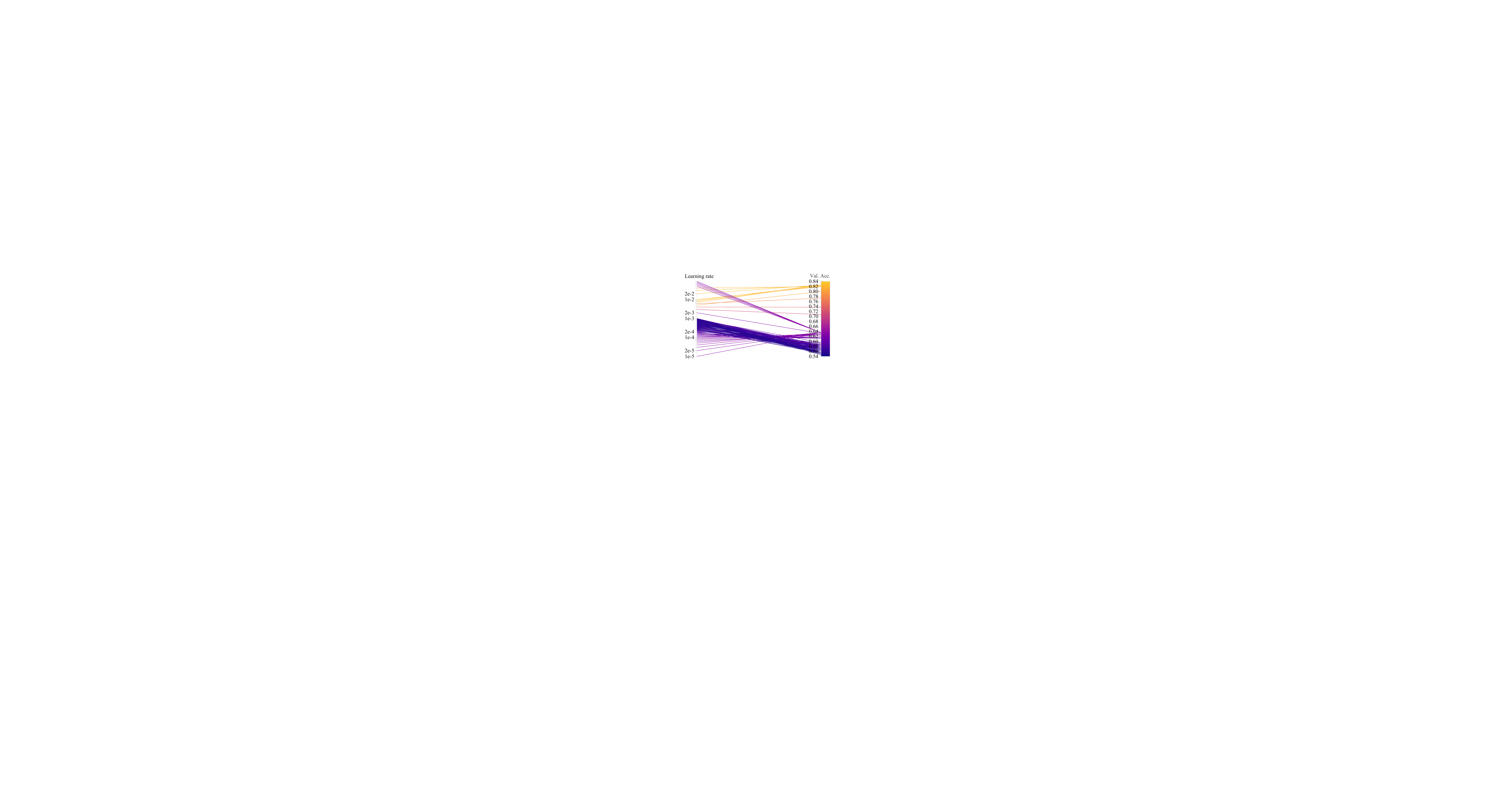}
        \label{fig:hyper-PV2-WSC}
        \end{minipage}
    }
    \subfigure[WSC (FL-tuning)]{
        \begin{minipage}[t]{0.25\linewidth}
        \centering
        \includegraphics[width=1\textwidth]{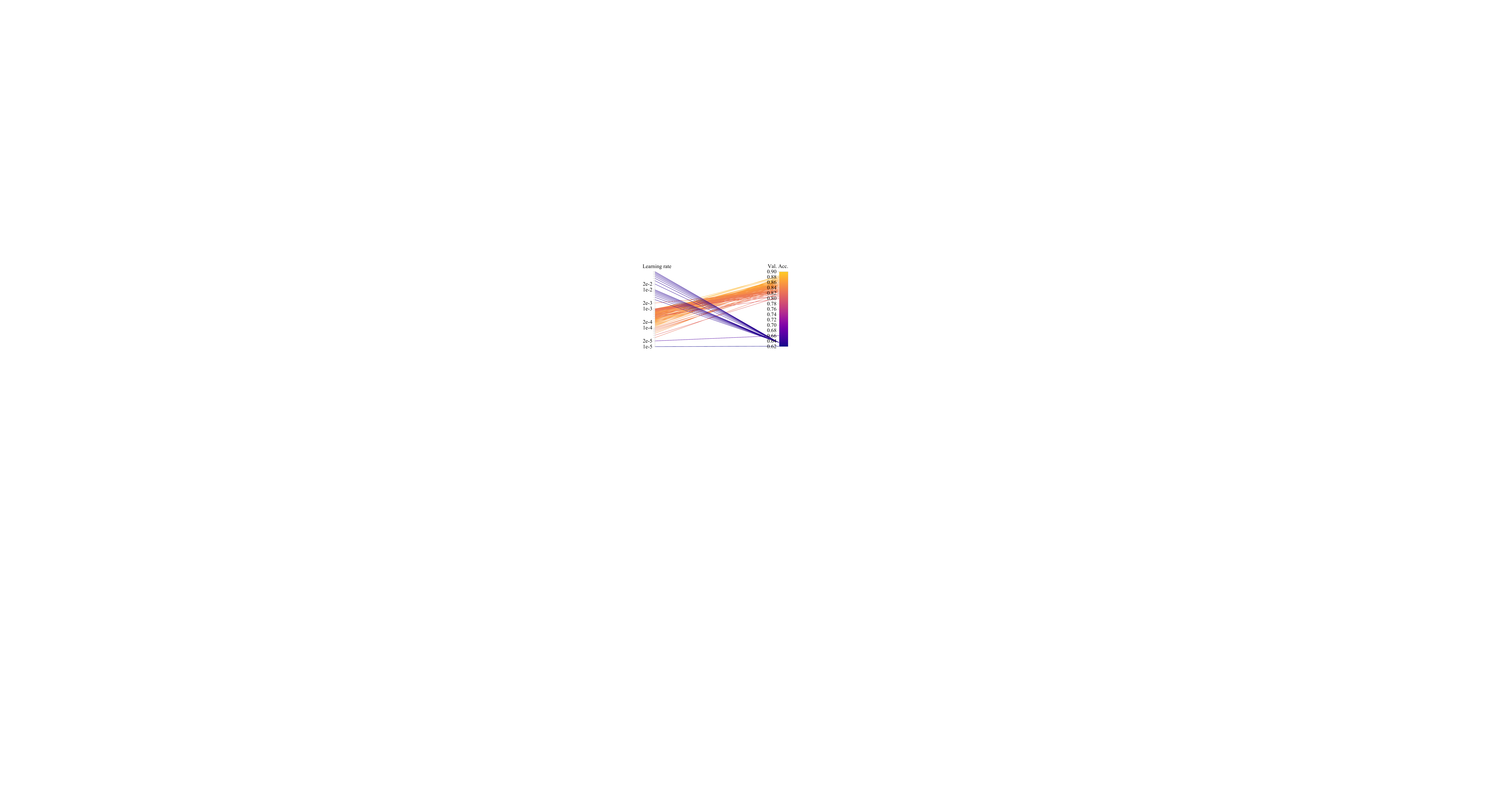}
        \label{fig:hyper-FL-WSC}
        \end{minipage}
    }
    \subfigure[CLUENER (P-tuning v2)]{
        \begin{minipage}[t]{0.25\linewidth}
        \centering
        \includegraphics[width=1\textwidth]{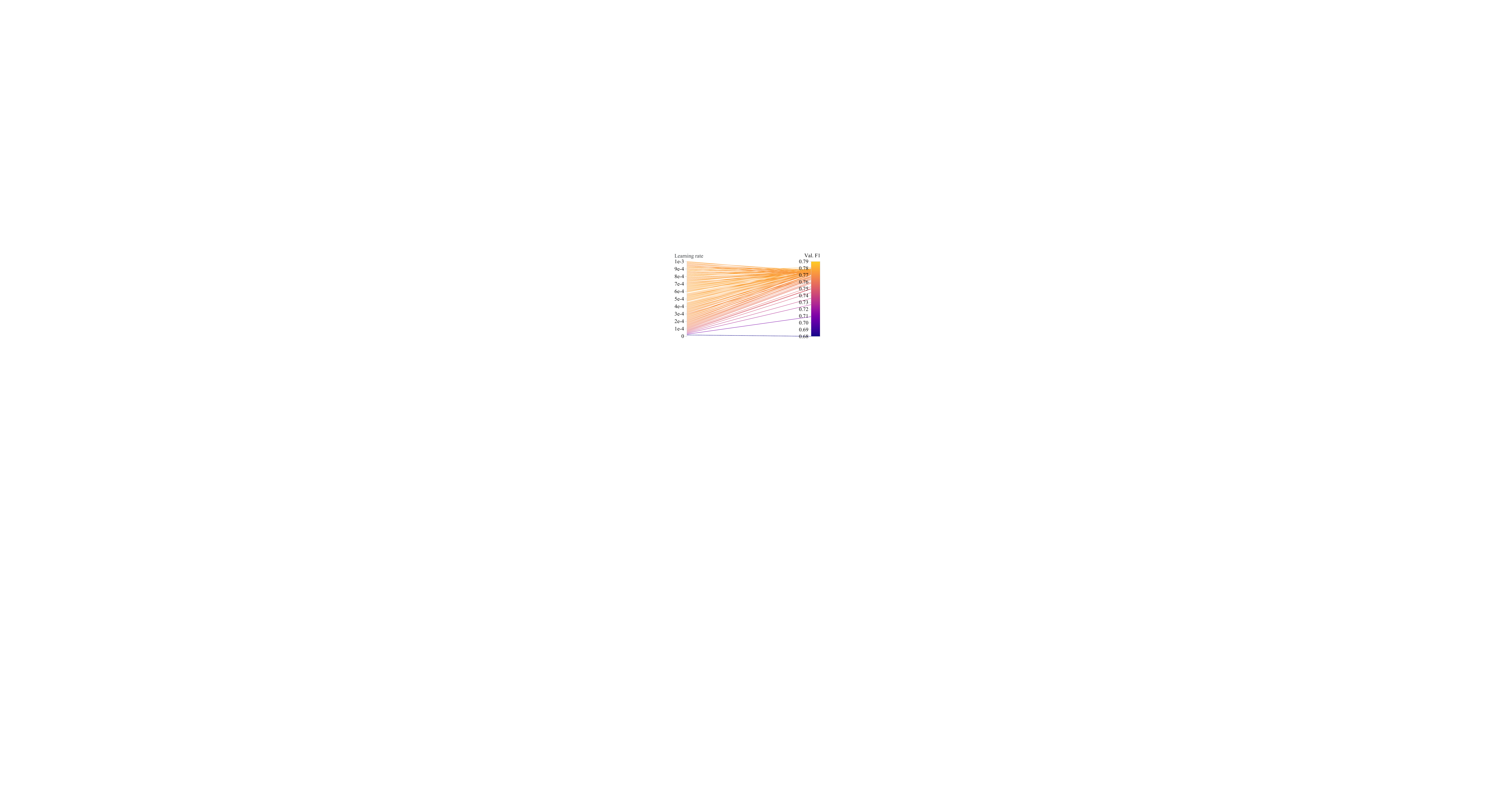}
        \label{fig:hyper-PV2-NER}
        \end{minipage}
    }
    \subfigure[CLUENER (FL-tuning)]{
        \begin{minipage}[t]{0.25\linewidth}
        \centering
        \includegraphics[width=1\textwidth]{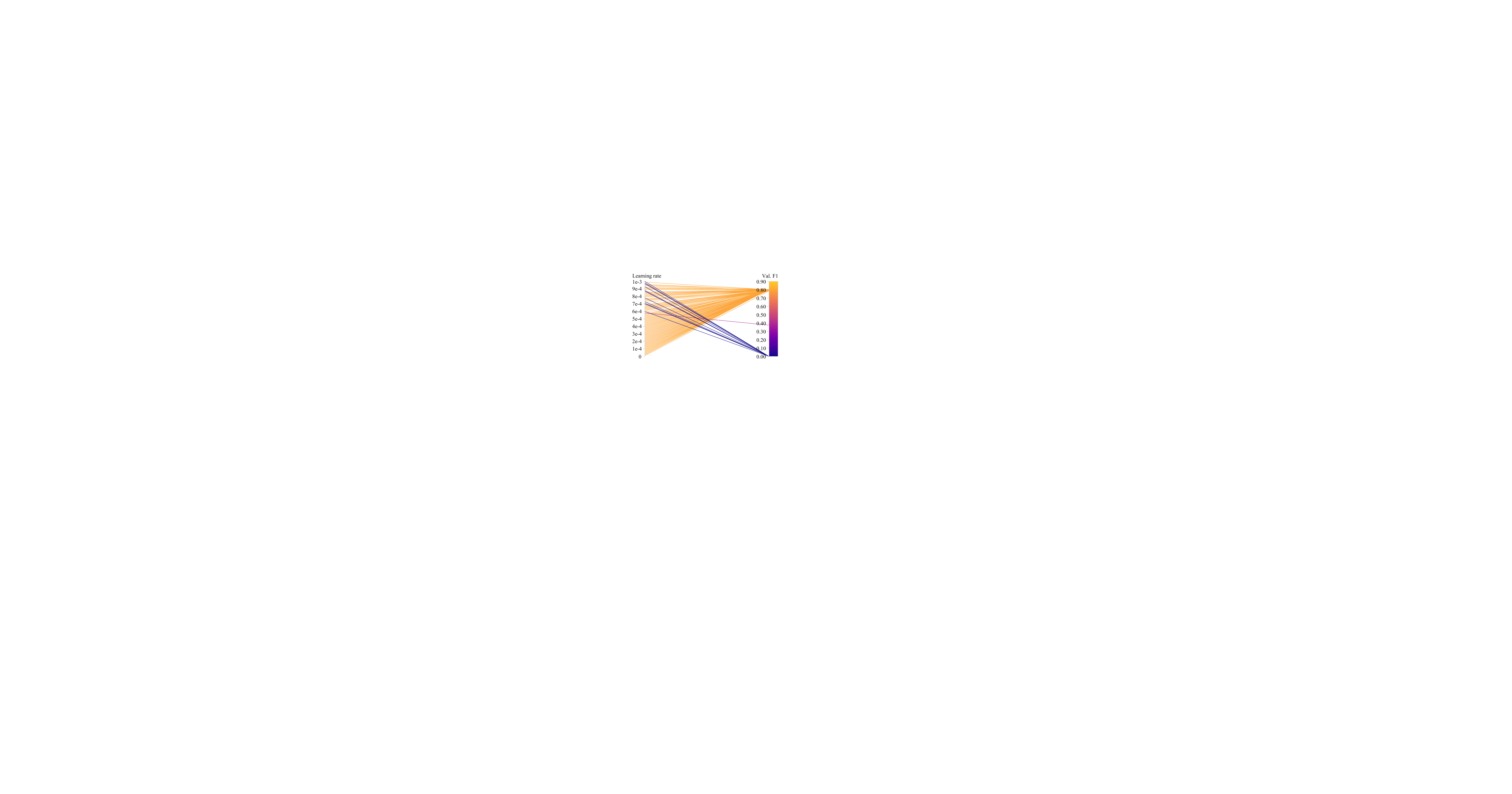}
        \label{fig:hyper-FL-NER}
        \end{minipage}
    }
}
\caption{Hyperparameter sensitivity of FL-tuning and P-tuning v2. The former is more insensitive to the learning rate than the latter.}
\label{fig:hyperparameter}
\end{figure}

\begin{figure}[!t]
\centering
\resizebox{\textwidth}{!}
{
    \subfigure[WSC]{
        \begin{minipage}[t]{0.25\linewidth}
        \centering
        \includegraphics[width=1\textwidth]{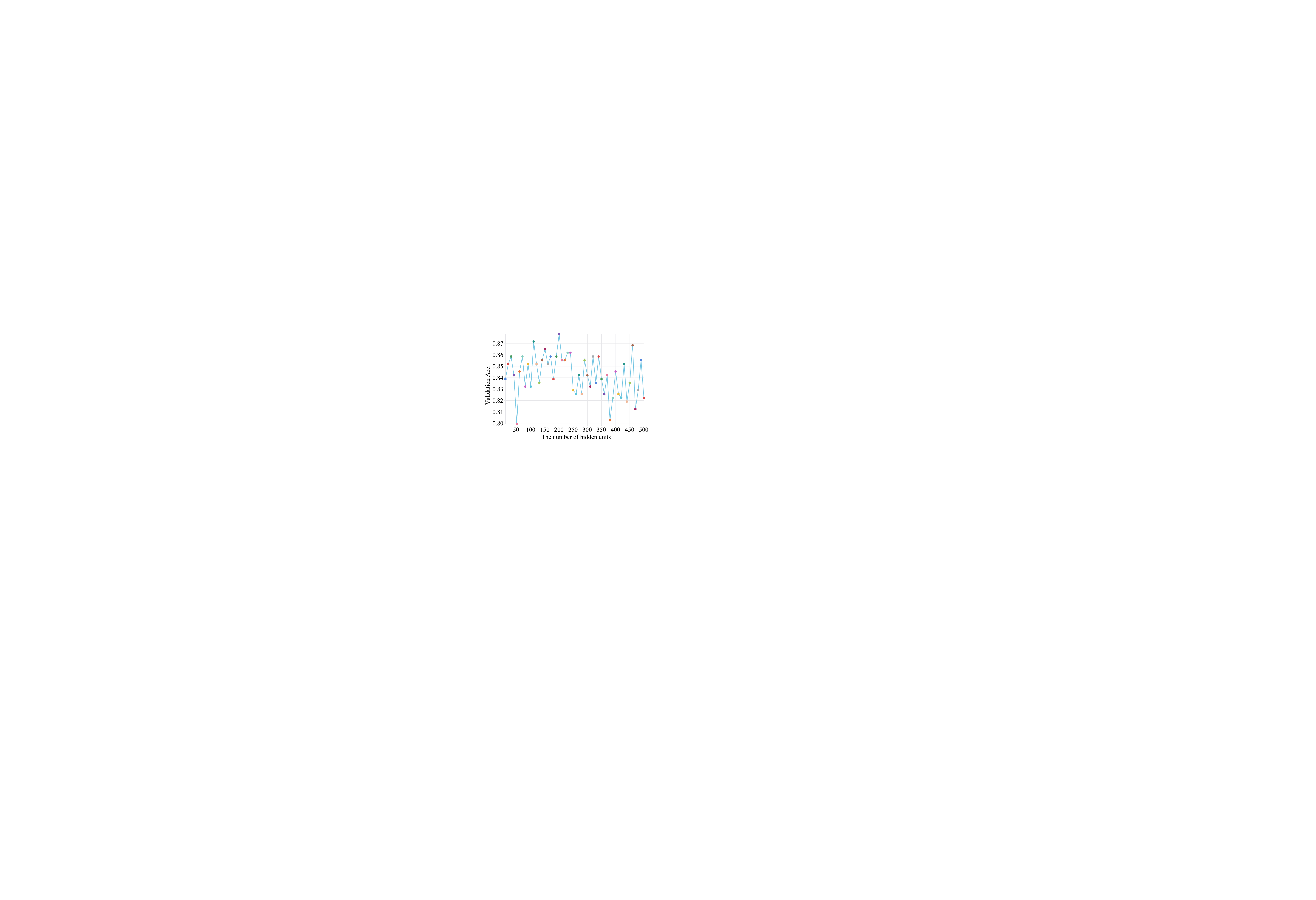}
        \label{fig:pl-wsc}
        \end{minipage}
    }
    \subfigure[OCNLI]{
        \begin{minipage}[t]{0.25\linewidth}
        \centering
        \includegraphics[width=1\textwidth]{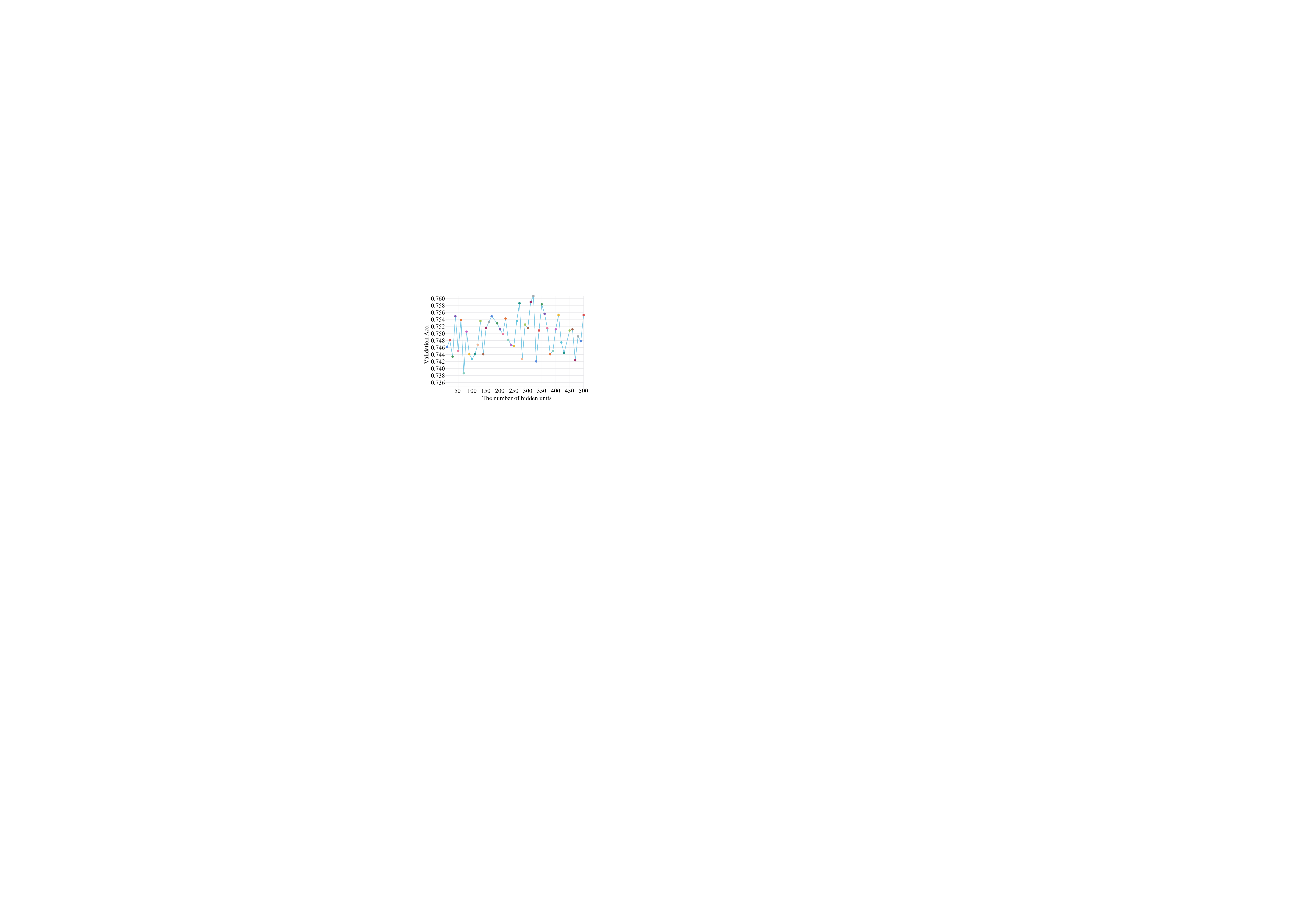}
        \label{fig:pl-ocnli}
        \end{minipage}
    }
    \subfigure[CLUENER]{
        \begin{minipage}[t]{0.25\linewidth}
        \centering
        \includegraphics[width=1\textwidth]{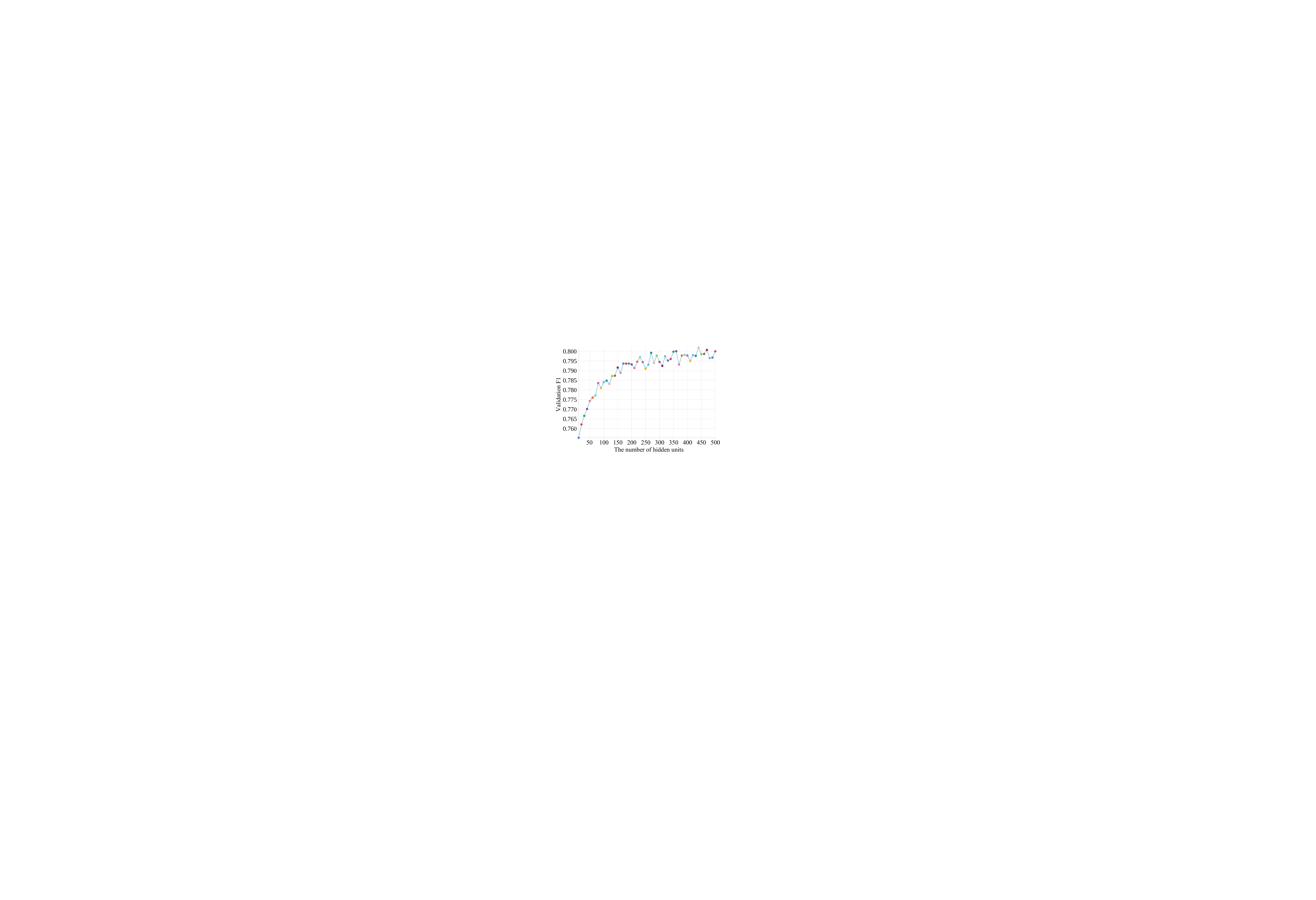}
        \label{fig:pl-cluener}
        \end{minipage}
    }
    \subfigure[CMRC2018]{
        \begin{minipage}[t]{0.25\linewidth}
        \centering
        \includegraphics[width=1\textwidth]{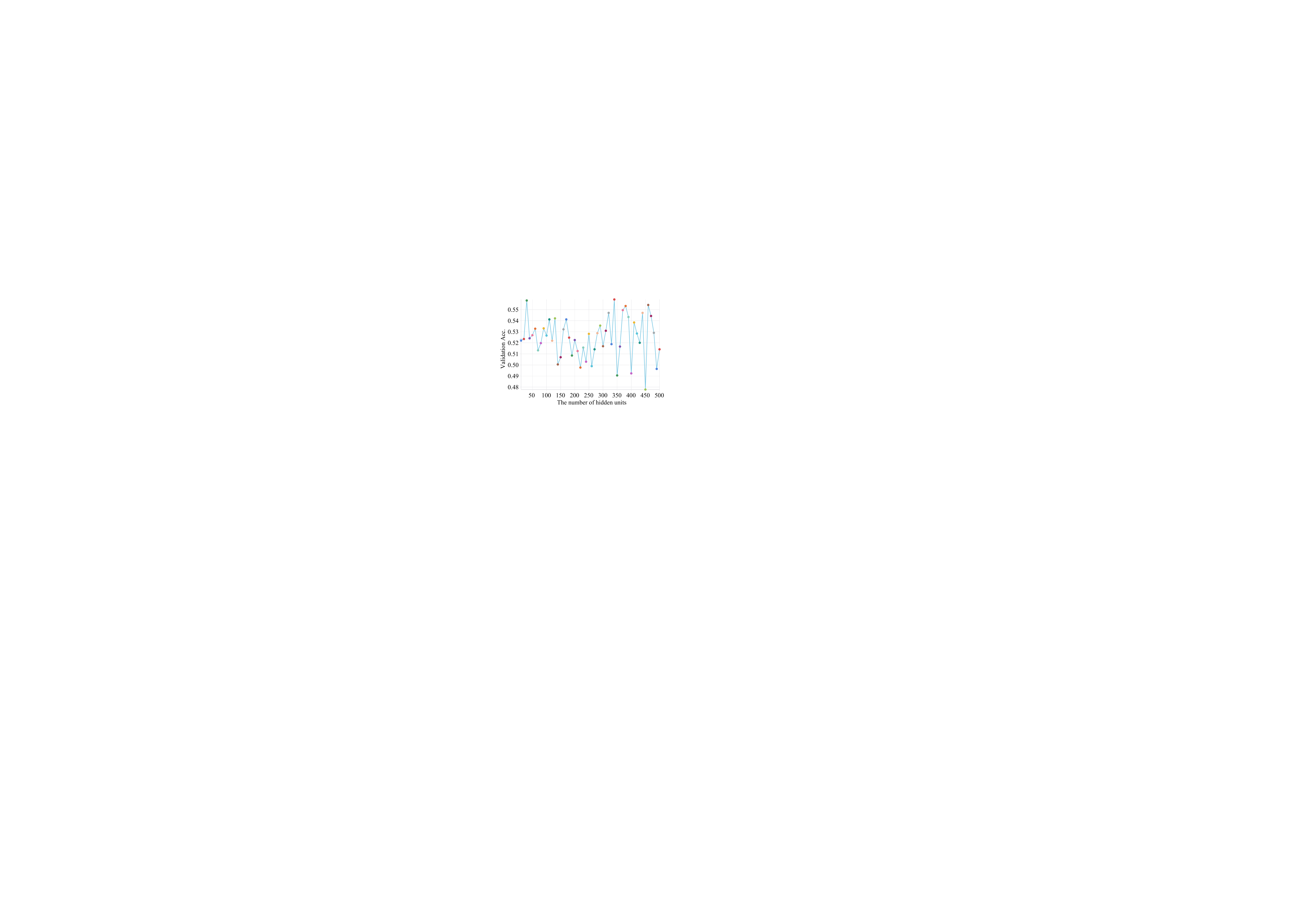}
        \label{fig:pl-CMRC2018}
        \end{minipage}
    }
}
\caption{The impact of the number of hidden units on model performance. ``[x-y]'' refers to the layer interval where we add hidden units. The optimal value of the number of hidden unit varies from task to task.}
\label{fig:plength}
\end{figure}

\begin{table}[t]
\caption{Comparison results of FL-tuning and MA-tuning (MA) on RoBERTa. It is more effective to realize layer tuning on FFN than multi-head self-attention.
}
\label{tab:FLvsAP}
\centering
\resizebox{\textwidth}{!}
{
    \begin{tabular}{ccccccccccc}
    \toprule[2pt]
    & \textbf{IFLYTEK}    & \textbf{TNEWS 1.1}     & \textbf{WSC 1.1}  & \textbf{AFQMC}    & \textbf{CMNLI}  &\textbf{CSL} & \textbf{CLUENER} & \textbf{C3 1.0}    & \textbf{CHID}  & \textbf{CMRC2018}\\ 
    \cline{2-11}

    FL-tuning   & \cellcolor{lightgray}{62.000}    & \cellcolor{lightgray}{57.700}     & 81.240     & 72.990     & \cellcolor{lightgray}{80.920}     & \cellcolor{lightgray}{85.030}     & \cellcolor{lightgray}{80.839}    & \cellcolor{lightgray}{73.720}  & \cellcolor{lightgray}{86.953} & \cellcolor{lightgray}{72.050}   \\

    MA-tuning  & 61.540  & 57.140     & \cellcolor{lightgray}{81.590}     & \cellcolor{lightgray}{73.740}     & 80.570     & 84.400     & 80.319  & 73.170    & 85.868   & 71.850  \\

    \bottomrule[2pt]
    \end{tabular}
}
\end{table}

\begin{figure}[t]
\centering
\resizebox{\textwidth}{!}
{
    \subfigure[TNEWS]{
        \begin{minipage}[t]{0.25\linewidth}
        \centering
        \includegraphics[width=1\textwidth]{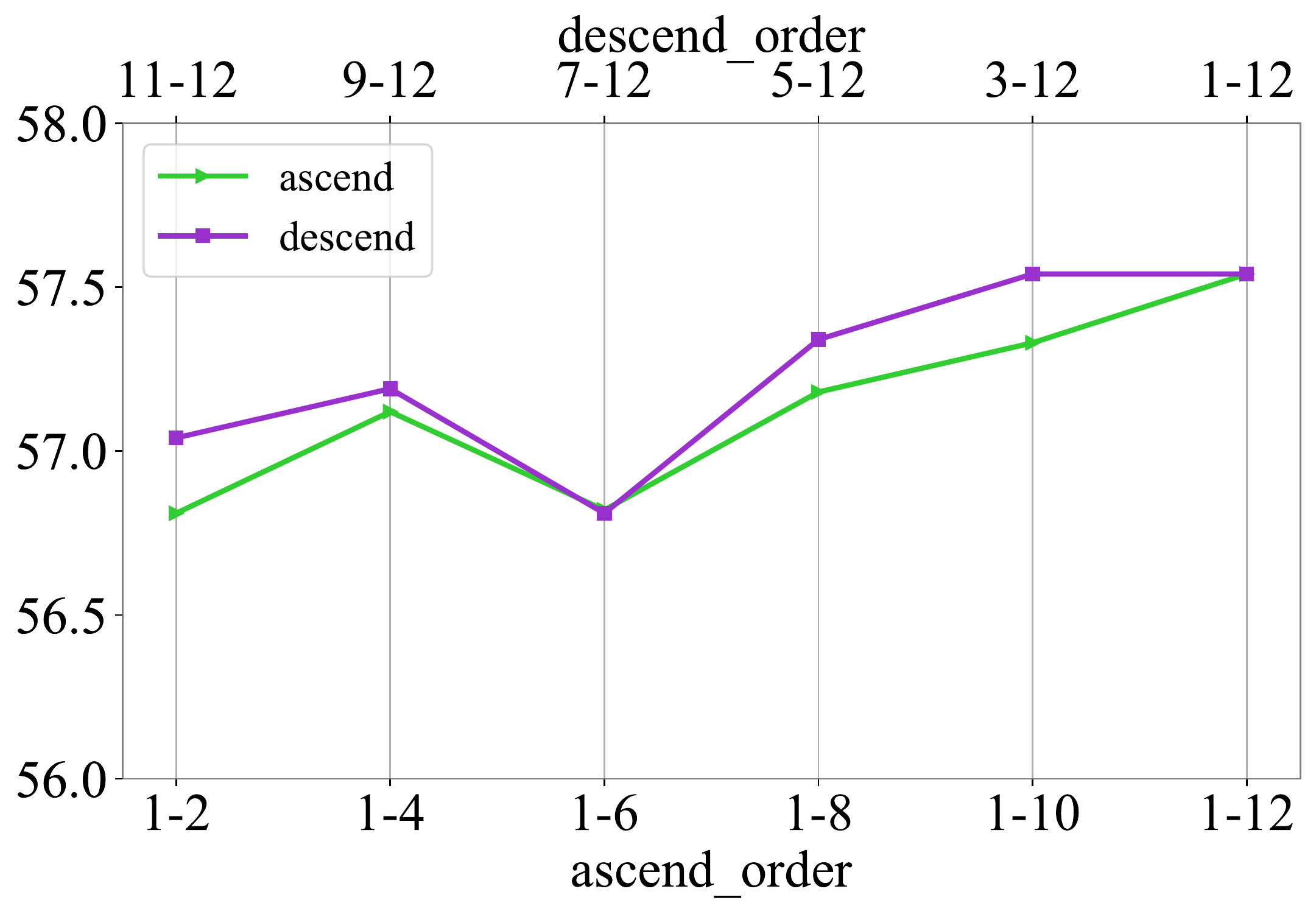}
        \label{fig:pd-tnews}
        \end{minipage}
    }
    \subfigure[WSC]{
        \begin{minipage}[t]{0.25\linewidth}
        \centering
        \includegraphics[width=1\textwidth]{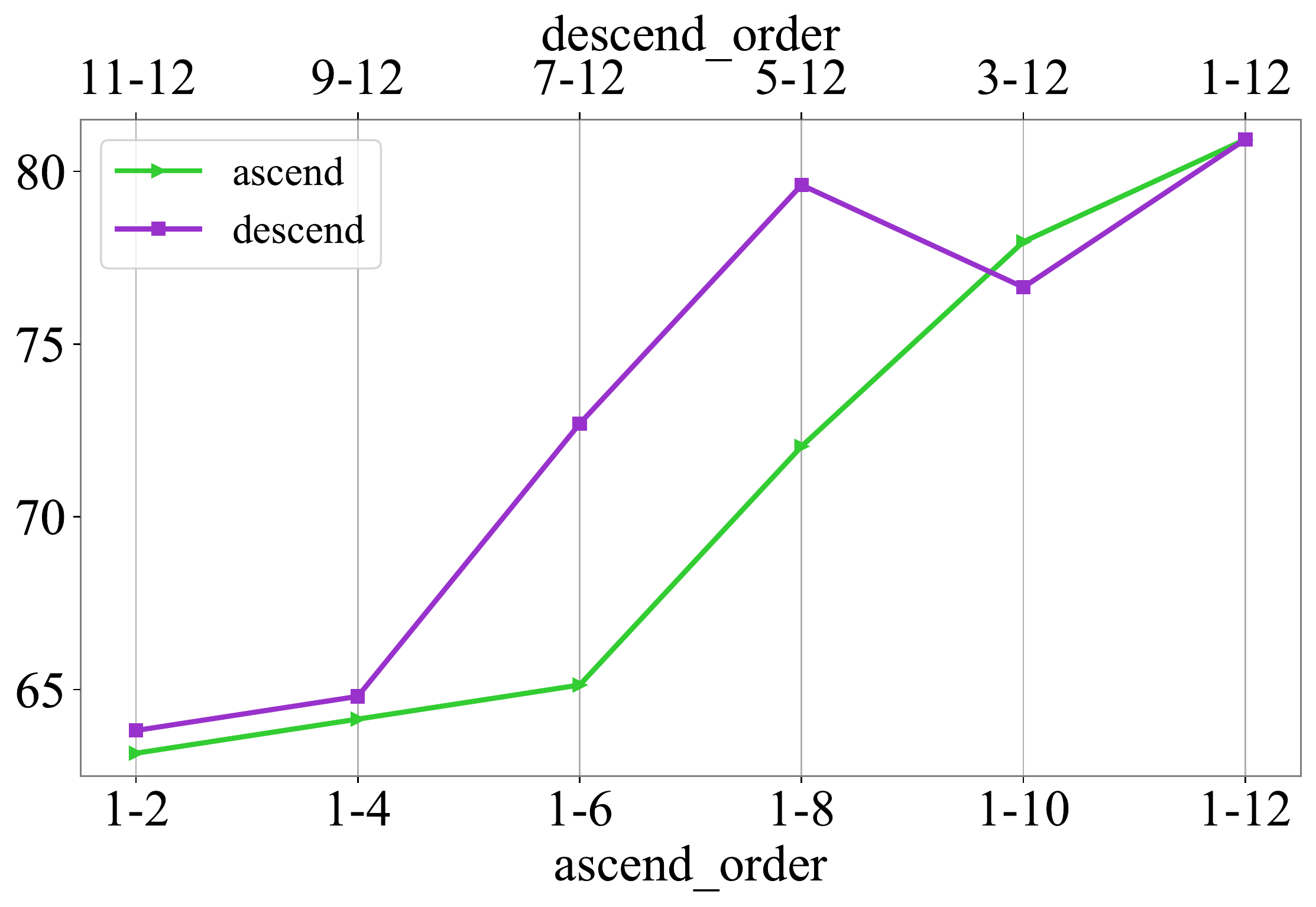}
        \label{fig:pd-WSC}
        \end{minipage}
    }
    \subfigure[OCNLI]{
        \begin{minipage}[t]{0.25\linewidth}
        \centering
        \includegraphics[width=1\textwidth]{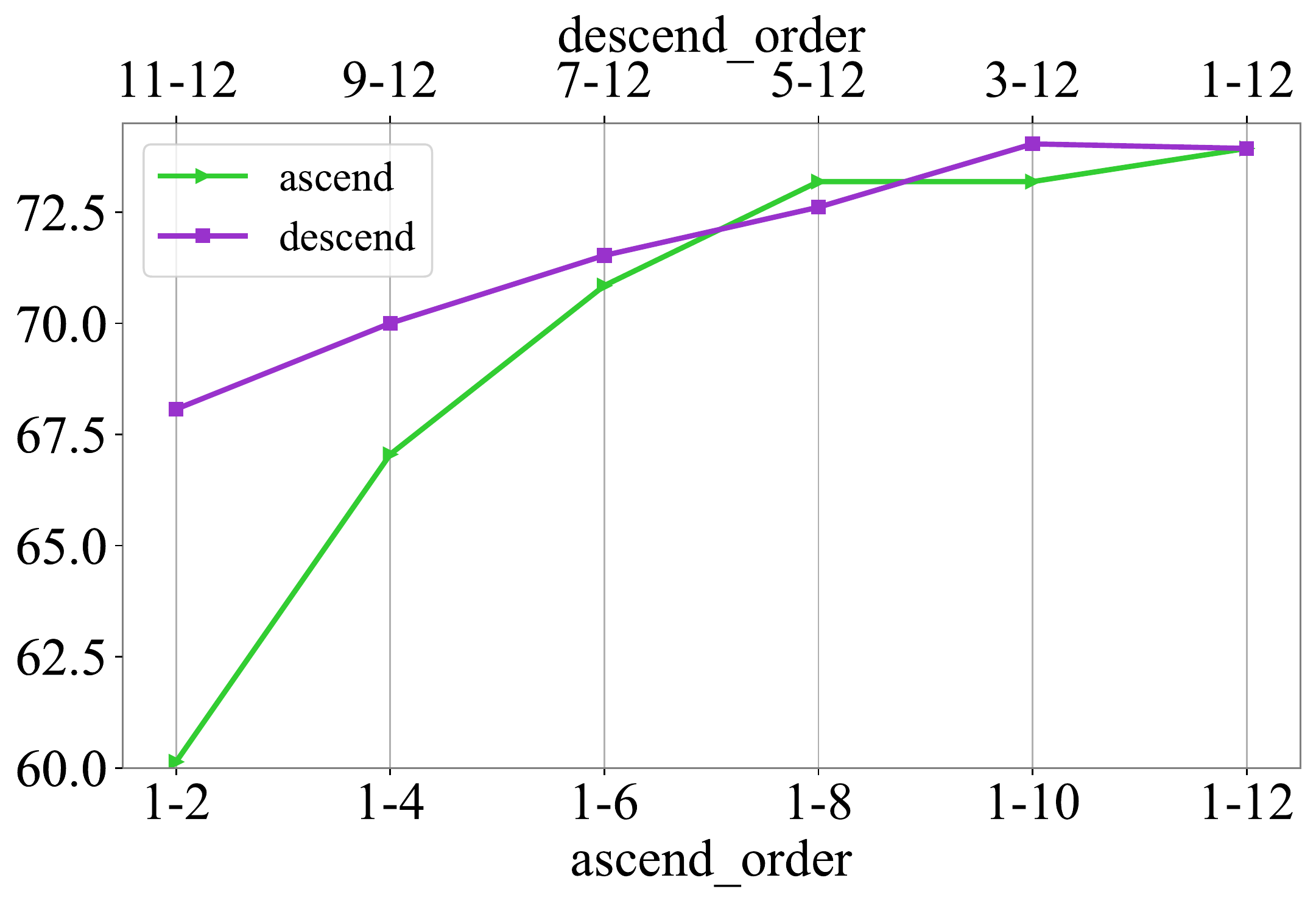}
        \label{fig:pd-OCNLI}
        \end{minipage}
    }
    \subfigure[CLUENER]{
        \begin{minipage}[t]{0.25\linewidth}
        \centering
        \includegraphics[width=1\textwidth]{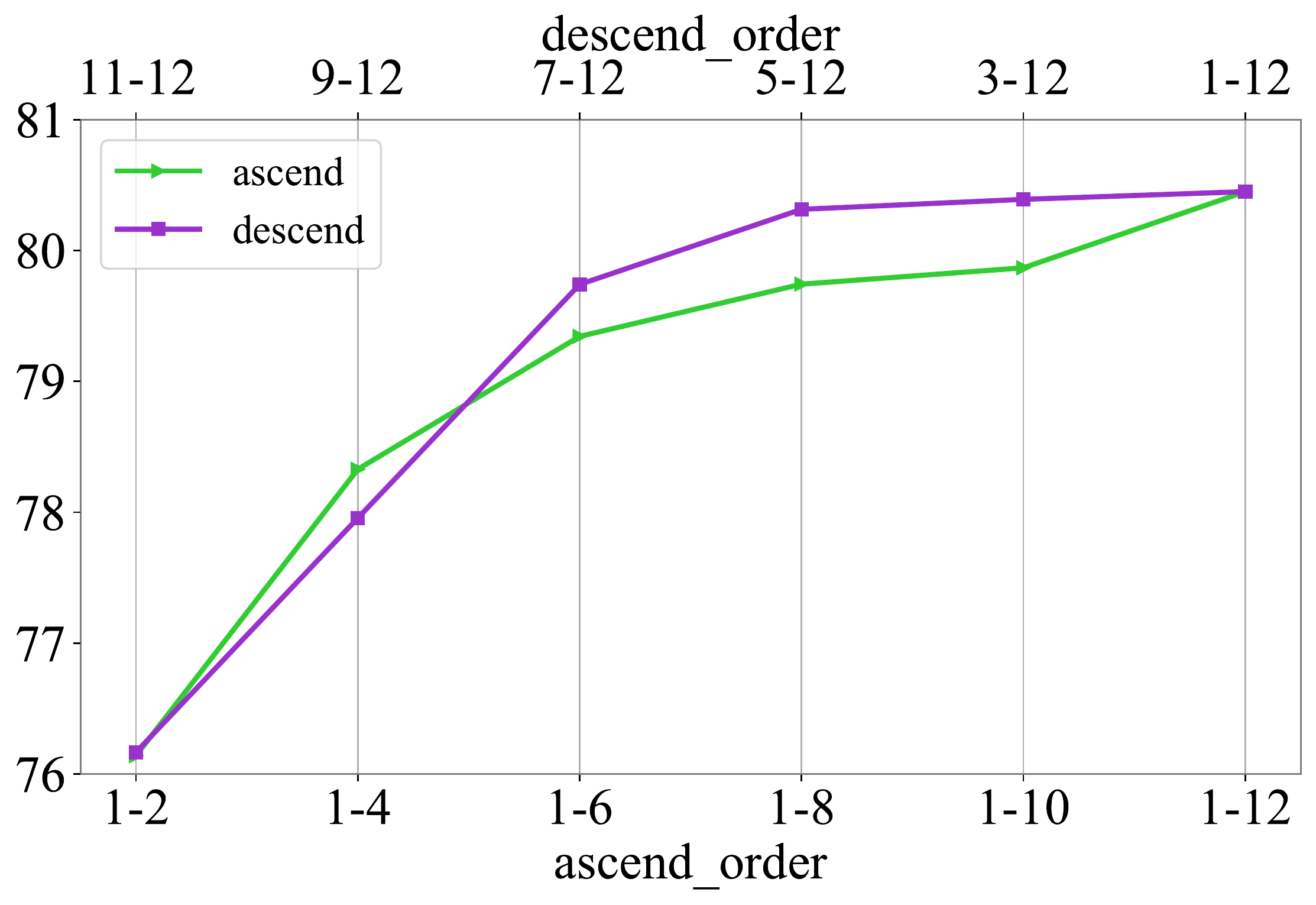}
        \label{fig:pd-CLUENER}
        \end{minipage}
    }
}
\caption{The impact of the depth of hidden units on model performance. The performance of our FL-tuning is positively correlated with the number of added layers. In addition, the deeper the layer of hidden units, the better the model performance.}
\label{fig:PromptDepth}
\end{figure}

\textbf{Summary:} Combing the above results, we conclude that: 1) The performance of FL-tuning is much better than that of prompt tuning methods. 2) With only about 3\% of Transformer's parameters to be trained, FL-tuning is comparable with fine-tuning on most datasets, and significantly outperforms fine-tuning on several datasets. 3) Compared with the full-data setting, FL-tuning's performance gain over prompt tuning methods is larger under the few-shot setting.

\subsection{Detailed Analysis}
Next, we take RoBERTa as the backbone to analyze our FL-tuning in detail from the perspectives of convergence, hyperparameter sensitivity as well as the impact of the number and depth of added hidden units on model performance.

\textbf{Faster convergence and more stable training.}
We compare our FL-tuning with PV2 in terms of convergence speed and training stability. The results on WSC, OCNLI, CLUENER, and CMRC2018 are shown in Figure \ref{fig:loss}. From the results, we observe that the loss value of FL-tuning decreases faster than that of PV2, which demonstrates that the former is better than the latter in terms of convergence speed. We further record their running time and find that our FL-tuning converges about 1.17 times faster than PV2 on average. In addition, the loss curve of FL-tuning is flatter than that of PV2 during the training process, indicating that FL-tuning is more stable.

\textbf{Hyperparameter insensitive.} The prediction results of deep models are often sensitive to hyperparameters, especially the learning rate. Even two learning rates with a small gap would make a great difference in predictions. Hence, we analyze the impact of different learning rates on model performance. The results on WSC and CLUENER datasets are reported in Figure \ref{fig:hyperparameter}. From the results, we find that in our FL-tuning, the results obtained with different learning rates are more concentrated, while the results are more scattered in PV2. This verifies that our model is more insensitive to the learning rate than PV2.


\textbf{The number of hidden units.} The number of added hidden units is an influential hyperparameter for our FL-tuning. We adjust the values to analyze its impact on the model performance. The results on four datasets are shown in Figure \ref{fig:plength}. The model performance improves with the increase of the number of hidden units on CLUENER, while there is no apparent regularity on the other three datasets. As a result, the optimal value of the number of hidden units varies from task to task.

\textbf{Depth of hidden units.} To analyze the influence of adding hidden units to different layers on model performance, we select $k$ layers in both ascending and descending order to introduce hidden units. The results on TNEWS, WSC, OCNLI, and CLUENER are shown in Figure \ref{fig:PromptDepth}. According to the results, we observe that the performance of our FL-tuning is not only positively correlated to the number of added layers, but also related to the layer in which the added hidden units are located. The deeper the layer, the better the model performance.


\subsection{Discussion}
In the Transformer, there are mainly two sub-layers: multi-head self-attention and FFN. The idea of FL-tuning can also be transferred to self-attention. Thus, we perform layer tuning on multi-head self-attention called MA-tuning to compare its performance with FL-tuning.
That is, we increase the hidden size of the matrices in multi-head self-attention.
The comparison results are listed in Table \ref{tab:FLvsAP}. We find that FL-tuning slightly outperforms MA-tuning in most cases. The possible reason is that, as mentioned in the Introduction, FFN occupies about $2/3$ of the number of parameters in the model and tuning it may perform better. In addition, the results also show the feasibility of layer tuning on self-attention and FFN at the same time. Hence, we will implement this idea in the future and further analyze its possible optimal combination.

\section{Conclusion}
\label{section:conclusion-future-work}
In this paper, we propose a novel tuning way, namely layer tuning, to accommodate PLMs to downstream tasks, which aims to add learnable parameters to the Transformer layers. Specifically, we mainly focus on layer tuning on FFN called FL-tuning in the Transformer. It introduces additional units into the hidden layer of each FFN. We conduct extensive experiments on the CLUE benchmark and the results show that: 1) With only about 3\% of Transformer's parameters to be trained, our FL-tuning is comparable with fine-tuning on most datasets, and significantly outperforms fine-tuning on several datasets. 2) FL-tuning is better than prompt tuning methods under both full-data and few-shot settings in almost all cases. 3) FL-tuning is more stable and converges about 1.17 times faster than P-tuning v2.

{
\small
\bibliographystyle{unsrt}
\bibliography{neurips_2022}
}

\end{document}